\newtheorem{theorem}{Theorem}
\newtheorem{proposition}{Proposition}
\newtheorem{lemma}{Lemma}
\newtheorem{definition}{Definition}
\title{OID-PPO: Optimal Interior Design using Proximal Policy Optimization by Transforming Design Guidelines into Reward Functions}
\author {
    % Authors
    Chanyoung Yoon\textsuperscript{\rm 1},
    Sangbong Yoo\textsuperscript{\rm 2},
    Soobin Yim\textsuperscript{\rm 1},
    Chansoo Kim\textsuperscript{\rm 2},
    Yun Jang\textsuperscript{\rm 1}
}
\begin{document}

\maketitle

\begin{abstract}
    Designing residential interiors strongly impacts occupant satisfaction but remains challenging due to unstructured spatial layouts, high computational demands, and reliance on expert knowledge. Existing methods based on optimization or deep learning are either computationally expensive or constrained by data scarcity. Reinforcement learning (RL) approaches often limit furniture placement to discrete positions and fail to incorporate design principles adequately. We propose OID-PPO, a novel RL framework for Optimal Interior Design using Proximal Policy Optimization, which integrates expert-defined functional and visual guidelines into a structured reward function. OID-PPO utilizes a diagonal Gaussian policy for continuous and flexible furniture placement, effectively exploring latent environmental dynamics under partial observability. Experiments conducted across diverse room shapes and furniture configurations demonstrate that OID-PPO significantly outperforms state-of-the-art methods in terms of layout quality and computational efficiency. Ablation studies further demonstrate the impact of structured guideline integration and reveal the distinct contributions of individual design constraints.
\end{abstract}

\section{Introduction}
\label{sec:intro}

Interior design plays a critical role in shaping comfort, functionality, and satisfaction in residential spaces, which accommodate essential daily activities such as cooking, cleaning, and sleeping~\cite{chen2015image2scene, wang2018deep, ritchie2019fast, stuart1982eval}. However, unlike structured environments such as hospitals or schools, residential spaces often lack standardized design frameworks, making objectives ambiguous and difficult to formalize~\cite{merrell2010computer, kan2017automated, liang2018knowledge}. Effective layout planning further relies on expert knowledge, creating barriers for both lay users and professionals~\cite{kan2017automated, liang2018knowledge}. The design process is inherently iterative and time-consuming, often requiring repeated interactions between designers and users~\cite{wu2019data, fisher2012example, merrell2011interactive}. While heuristic and rule-based guidelines have been proposed to aid this process~\cite{yu2011make}, systematically applying them in computational frameworks remains a significant challenge.

To address these challenges, prior research has explored the Optimal Interior Design (OID) problem using optimization, deep learning, and reinforcement learning (RL) techniques. Optimization-based approaches encode design guidelines as cost functions and solve OID by minimizing these objectives~\cite{yu2011make, merrell2011interactive}, but often suffer from high computational costs and large, non-convex search spaces~\cite{kan2018automatic}. Deep learning methods attempt to learn layout patterns directly from data~\cite{li2019grains, wang2019planit, hu2020graph2plan}, enabling layout generation with minimal user input, but require large-scale, high-quality datasets that are rarely available~\cite{csenbacslar2021rlss}. Reinforcement learning (RL) offers a promising alternative by learning design policies through interaction with the environment, thereby reducing the need for labeled data~\cite{csenbacslar2021rlss}. Recent RL-based approaches have been applied to OID~\cite{wang2019rlayout, ieq}, and advances in deep reinforcement learning (DRL) have improved training efficiency in complex environments~\cite{haisor, csenbacslar2021rlss}. However, existing DRL methods typically discretize furniture positions or rely on grid-based actions, which limits placement flexibility and fails to incorporate expert design principles in a structured manner.

In this paper, we propose OID-PPO (Optimizing Interior Design using Proximal Policy Optimization), a novel reinforcement learning framework for synthesizing high-quality interior layouts in continuous spatial domains. OID-PPO integrates two geometric encoders for representing the current and next furniture items, along with a convolutional encoder to capture room geometry and spatial semantics. Expert-defined design guidelines are embedded into a structured reward function, partitioned into functional and visual components. The agent employs a diagonal Gaussian policy to enable continuous and flexible furniture placement, while effectively exploring latent environmental dynamics under partial observability. We evaluate OID-PPO on diverse room geometries and furniture configurations, demonstrating that it consistently outperforms state-of-the-art optimization and DRL-based methods. Ablation studies further reveal the distinct contributions of individual reward components in achieving functionally valid, aesthetically pleasing, and spatially coherent layouts.
\section{Problem Definition}

The interior design process entails the sequential placement of furniture within a bounded room, subject to functional and visual constraints. This sequential decision-making structure aligns naturally with a Markov Decision Process (MDP), where each action corresponds to placing a furniture item and induces a transition in the spatial configuration. To model this formally and enable learning-based optimization, we formulate the single-room OID problem as a finite episodic MDP, defined by the tuple $\mathcal{M} = \langle S, A, P, R, \gamma \rangle$, where \( S \) is the state set, \( A \) the action set, \( P \) the transition function, \( R \) the design-based reward function, and \( \gamma \) the discount factor. The environment is represented as an axis-aligned quadrilateral room $E = [0, N] \times [0, M] \subset \mathbb{R}^2$, where $N$ and $M$ denote the room's width and height, respectively. The room boundary $\partial E$ contains at least one doorway, forming the door set $D \subset \partial E$.

At the start of each episode, the agent is provided with a finite set of furniture items $F$, sorted in descending order of footprint area to prioritize the placement of larger items. Each item $f \in F$ is associated with a canonical footprint $\Pi(f) \subset \mathbb{R}^2$, defined as an origin-centered geometric polygon. Furniture can be rotated using a discrete set of rotation operators $\mathbf{z} = { \mathbf{z}_0, \mathbf{z}_1, \mathbf{z}_2, \mathbf{z}_3 }$, where each $\mathbf{z}_k$ corresponds to a counterclockwise rotation by $90^\circ \times k$.

At each time step $t$, the placement action for furniture item $f_t$ consists of a two-dimensional position $\mathbf{x}_t = (x_f, y_f) \in \mathbb{R}^2$ and a rotation index $k_t \in {0, 1, 2, 3}$. The resulting footprint is computed via a rigid-body transformation $\mathcal{T}(\mathbf{x}, k;f) = \mathbf{z}_k \Pi(f) + \mathbf{x}$, where $\mathbf{z}_{k_t} \Pi(f_t)$ denotes the rotated footprint and $\mathbf{x}_t$ is the translation applied to position the furniture within the room. To proceed, we define the conditions that determine whether a placement action is valid.

\begin{definition}[Valid Placement]
    A placement action $a = (\mathbf{x}, k)$ for furniture item $f$ is considered valid if and only if the transformed footprint $\mathcal{T}(\mathbf{x}, k; f)$ satisfies both $\mathcal{T}(\mathbf{x}, k;f) \subset E$ and $\mathcal{T}(\mathbf{x},k;f)\cap \mathcal{O} = \emptyset$, where $E$ is the room boundary and $\mathcal{O}_t$ denotes the occupancy area at time $t$, defined as $\mathcal{O}_t = \bigcup_{i=0}^{t-1} \mathcal{T}(\mathbf{x}, k; f_i)$. Consequently, the feasible action set at time step $t$ is defined as $A_t = \{ (\mathbf{x}_t,k_t) \; | \; \mathcal{T}(\mathbf{x}_t, k_t;f_t) \subset E, \; \mathcal{T}(\mathbf{x}_t,k_t;f_t)\cap \mathcal{O}_t = \emptyset \}$.
\end{definition}

The state $s_t \in S$ at time step $t$ consists of three components: (i) the geometric descriptor $\mathbf{e}_t$ of the current furniture item $f_t$; (ii) the descriptor $\mathbf{e}_{t+1}$ of the next furniture item $f_{t+1}$; and (iii) a binary occupancy map $\mathbf{O}_t(x)$ that encodes the spatial footprint of all previously placed items. The occupancy map is defined as $\mathbf{O}_t(x) = \max_{f_{prev} \in F_{<t}} \mathcal{X}_{\mathcal{T}(\mathbf{x},k;f_{prev})} (x)$, where $\mathcal{X}_X(x) = 1_{x \in X}$. Given that the furniture set $F$ has finite cardinality, the episode horizon is naturally bounded. We formalize this observation with the following proposition:

\begin{proposition}[Finite horizon]
\label{prop:horizon}
    Every episode terminates after at most $|F|$ steps, thus the horizon $H$ satisfies $H \leq |F|$.
\end{proposition}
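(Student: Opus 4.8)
The plan is to prove the bound by a straightforward monotone counting argument on the number of furniture items that still have to be processed. First I would make explicit the bookkeeping that is already implicit in the MDP construction: at time step $t$ the agent acts on the single item $f_t$ taken from the pre-sorted list $F$, and that item is never revisited at any later step. Consequently the set of not-yet-processed items, $F_{\geq t} = F \setminus \{f_0,\dots,f_{t-1}\}$, has cardinality $|F| - t$, and $F_{<t}$ (the set driving the occupancy map $\mathbf{O}_t$) grows by exactly one element per transition.

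Next I would introduce $\phi_t = |F_{\geq t}| = |F| - t$ as a potential function. It takes values in the non-negative integers at every reachable time step, it starts at $\phi_0 = |F|$, and each environment transition decreases it by exactly one, since processing $f_t$ removes precisely one element from $F_{\geq t}$. A strictly decreasing sequence of non-negative integers beginning at $|F|$ has length at most $|F|+1$ (it can only run through the values $|F|, |F|-1, \dots, 1, 0$), so $\phi_t = 0$ is necessarily reached by $t = |F|$.

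Then I would connect this to the episodic termination rule. An episode ends as soon as one of two conditions holds: (i) the feasible action set $A_t$ is empty, i.e. the current item $f_t$ admits no valid placement in the remaining free region $E \setminus \mathcal{O}_t$ in the sense of Definition~1; or (ii) every item has been placed, i.e. $F_{\geq t} = \emptyset$, equivalently $\phi_t = 0$. Condition (ii) is guaranteed to trigger no later than $t = |F|$ by the counting argument, while condition (i) can only cause earlier termination. In either case the number of executed steps — which is exactly the horizon $H$ — satisfies $H \leq |F|$.

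There is essentially no hard step here; the single point I would state carefully rather than gloss over is the claim that no item is ever reprocessed, so that $\phi_t$ is genuinely strictly decreasing rather than merely non-increasing. This follows from the fact that the state carries the step index $t$ implicitly — through the descriptors $\mathbf{e}_t$, $\mathbf{e}_{t+1}$ of the $t$-th and $(t{+}1)$-th items and through $\mathbf{O}_t$, which is built from $F_{<t}$ — so the environment advances deterministically through the fixed ordering of the finite set $F$.
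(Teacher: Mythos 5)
Your proposal is correct and follows essentially the same argument as the paper's proof: exactly one item from the finite set $F$ is processed per step, so the episode must end within $|F|$ steps, with early termination on an invalid action only shortening it further. The potential-function bookkeeping via $\phi_t = |F| - t$ is just a more explicit rendering of the same counting idea.
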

\begin{proof}
    At each time step, exactly one furniture item from the finite set $F$ is placed. Once all $|F|$ items have been placed, the episode terminates as no further actions remain.
\end{proof}

At each time step $t$, the action $a_t = (\mathbf{x}t, k_t) \in A \subset \mathbb{R}^2 \times {0, 1, 2, 3}$ specifies the placement position and rotation of furniture item $f_t$. If the action is valid ($a_t \in A_t$), the transition function $P$ deterministically moves the system to the next state $s_{t+1}$. Otherwise, if the action is invalid ($a_t \notin A_t$), the episode terminates immediately by transitioning to the terminal state $s_H$, and no further rewards are granted.

The reward function $R(s_t, a_t)$ aggregates multiple partial reward components, collectively denoted as $R_{\text{idg}}$, which capture adherence to established interior design guidelines (detailed in Section 3). Formally, the reward at each time step is defined as $R(s_t, a_t) = R_{\text{idg}}(s_t,a_t) \in [-1, 1]$. The agent's objective is to learn an optimal policy $\pi^*$ that maximizes the expected cumulative reward over the finite episode horizon:
\begin{equation*}
    \pi^* = \arg \max_\pi \mathbb{E}_\pi \left[ \sum_{t=0}^H \gamma^t R(s_t, a_t)\right]
\end{equation*}
where $\gamma \in (0, 1]$ is the discount factor. By integrating the MDP-based problem formulation, the explicit definition of placement validity, and the finite-horizon condition established above, we lay a rigorous theoretical foundation for developing an RL framework tailored to solving the OID problem, as detailed in the subsequent sections.
\section{Interior Design Guidelines}

Interior design has traditionally relied on expert intuition rather than formal quantitative criteria. To enable computational reasoning and automation, prior work has translated these intuitive practices into explicit and codified design guidelines~\cite{yu2011make, merrell2011interactive}. Building upon this foundation, we formalize both functional and visual criteria as quantifiable reward functions, which are integrated into a unified guideline-based reward to support learning-based layout optimization.

\noindent \textbf{Pairwise Relationship.}
Certain furniture pairs form functionally cohesive units, such as a desk and chair, where misalignment or excessive separation hinders usability and spatial coherence. To promote proper pairing, we introduce a reward function composed of two components: a distance-based kernel $K_{\text{dist}}$ and a directional alignment kernel $K_{\text{dir}}$:
{\footnotesize
\begin{equation*}
    K_{\text{dist}}(p,c) = 1 + \cos \left( \frac{\pi d_{pc}}{d_\triangle} \right), \; K_{\text{dir}}(p,c) = \frac{1+\alpha_{pc} \langle \textbf{n}_p, \textbf{n}_c \rangle}{2}
\end{equation*}
}
Here, $d_{pc}$ denotes the Euclidean distance between the centers of the parent $p$ and child $c$, while $d_\triangle = \sqrt{N^2 + M^2}$ represents the diagonal length of the room. The unit vectors $\mathbf{n}_p$ and $\mathbf{n}_c$ indicate the front-facing directions of the respective furniture items. The inner product $\langle \mathbf{n}_p, \mathbf{n}_c \rangle$ quantifies their directional alignment. The parameter $\alpha_{pc} \in \{-1, 1\}$ specifies the preferred orientation: $\alpha_{pc} = -1$ encourages face-to-face alignment, while $\alpha_{pc} = +1$ promotes parallel alignment. Based on the distance and directional kernels, the pairwise reward function $R_{\text{pair}}$ is defined as:
\begin{equation*}
    R_{\text{pair}} = \frac{1}{|\mathcal{P}|} \sum_{(p,c) \in \mathcal{P}} K_{\text{dist}}(p,c) \cdot K_{\text{dir}}(p,c)
\end{equation*}
where $\mathcal{P}$ denotes the set of predefined parent–child furniture pairs. Maximizing $R_{\text{pair}}$ encourages the agent to place functionally related items nearby with appropriate orientation.

\noindent \textbf{Accessibility.}
Accessibility quantifies how easily users can approach and interact with furniture. We formally define the associated access constraints as follows.

\begin{definition}[Access Area and Violation Area]
    For each furniture item $f$, let $\mathbf{n}_f$ denote the front-facing direction, and $\mathbf{n}_f^\bot$ its orthogonal vector. Define the set of required access directions as $\mathcal{D}_f = \{\pm \mathbf{n}_f, \pm \mathbf{n}_f^\bot\}$, with each direction $\bar{\mathbf{d}}$ associated with a minimum clearance offset $\Omega_{f,\bar{\mathbf{d}}}$. Let $U_f = \bigcup_{\bar{\mathbf{d}} \in \mathcal{D}_f} U_{f,\bar{\mathbf{d}}}$. The required access area is defined via the Minkowski sum as $U_{f,\bar{\mathbf{d}}} = \Pi(f) \;\oplus \; \{ \Delta \bar{\mathbf{d}} \; | \; 0 < \Delta \leq \Omega_{f,\bar{\mathbf{d}}} \}$. Let $F^{\neg \mathcal{P}}$ denote the predefined set of furniture not functionally paired with $f$. The accessibility violation area $\nu(f)$ is the portion of the required access area $U_f$ that intersects with the footprint of non-paired items $\nu(f) = U_f \cap \bigcup_{q \in F^{\neg \mathcal{P}}(f)} \Pi(q)$ where $F^{\neg \mathcal{P}}(f) = \{ q \in F \; | \; q \neq f, \;(f, q) \notin \mathcal{P} \}$.
\end{definition}

The accessibility reward $R_a$ is defined as the average proportion of obstructed access areas across all furniture:
\begin{equation*}
    R_a = 1- \frac{2}{|F|} \sum_{f \in F} \frac{|\nu(f)|}{|U_f|}
\end{equation*}
Here, $R_a = 1$ indicates that all furniture items are fully accessible, whereas $R_a = -1$ corresponds to complete inaccessibility. The OID-PPO agent is thus incentivized to maximize $R_a$ by preserving necessary clearance around furniture and ensuring usability.

\noindent \textbf{Visibility.}
To promote usability, we penalize layouts where the front of a furniture item directly faces a wall. Let $\mathbf{n}_{w(f)}$ be the normal vector of the wall closest to furniture $f$. The visibility reward is defined as:
\begin{equation*}
    R_v = -\frac{1}{|F|} \sum_{f \in F} \langle \mathbf{n}_f, \mathbf{n}_{w(f)}\rangle
\end{equation*}
A dot product value of $+1$ indicates that the front of the furniture is directly facing the wall, resulting in the maximum penalty. Conversely, a value of $-1$ signifies that the furniture front is oriented away from the wall, yielding the maximum reward. By maximizing $R_v$, the agent learns to orient furniture toward open areas, thereby enhancing both usability and visual comfort.

\noindent \textbf{Pathway Connection.}
A feasible interior layout requires that each furniture item be reachable from at least one doorway without obstruction and that adequate spatial buffers are maintained near doorways. For each furniture item $f$, we calculate its proximity to the nearest doorway using the Euclidean distance $d_{\text{door}}$, and assess its accessibility by computing the shortest unobstructed path from the doorway using A* search on a discretized occupancy grid. Reachability is formally defined as follows:

\begin{definition}[Reachability Distance]
    For each furniture item $f$, the reachability distance $\rho(f)$ is defined as the length of the shortest unobstructed path from any doorway $D$ to the furniture center $\mathbf{x}$, computed using A* search on a discretized occupancy grid. If no such path exists due to obstructions, then $\rho(f) = \infty$.
\end{definition}

Based on this measure, the pathway connection reward $R_{\text{path}}$ integrates both reachability and doorway proximity:
\begin{equation*}
    R_{\text{path}} = 1-\frac{2}{|F|} \sum_{f \in F} \left[ (1-I_f)+ e^{-\kappa_f}I_f\right]
\end{equation*}
where $\kappa_f = (d_{\text{door}}/d_\triangle)^2$ and $I_f = \mathcal{X}_{\rho(f) < \infty}$ is an indicator equal to 1 if $f$ is reachable and 0 otherwise. Maximizing $R_{\text{path}}$ encourages the agent to maintain clear paths and unobstructed access to furniture from doorways. We further establish a theoretical guarantee that reachable layouts consistently receive higher rewards:

\begin{proposition}[Reachable Layouts Receive Higher Pathway Reward]
\label{prop:pathway}
    Let $\mathcal{L}_{\text{conn}}$ be a layout where every furniture is reachable~(i.e., $\rho(f) < \infty, \; \forall f \in F$), and let $\mathcal{L}_{\text{disc}}$ be an otherwise identical layout where at least one furniture $f' \in F$ is unreachable~(i.e., $\rho(f') = \infty$). Then $R_{\text{path}}(\mathcal{L}_{\text{conn}}) > R_{\text{path}}(\mathcal{L}_{\text{disc}})$.
\end{proposition}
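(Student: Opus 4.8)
The plan is to prove the strict inequality by a term‑by‑term comparison of the summands in $R_{\text{path}}$. First I would introduce the per‑item contribution $g_f(\mathcal{L}) := (1 - I_f) + e^{-\kappa_f} I_f$, so that $R_{\text{path}}(\mathcal{L}) = 1 - \frac{2}{|F|}\sum_{f \in F} g_f(\mathcal{L})$. Since the affine map $u \mapsto 1 - \frac{2}{|F|}u$ is strictly decreasing, it suffices to establish $\sum_{f \in F} g_f(\mathcal{L}_{\text{conn}}) < \sum_{f \in F} g_f(\mathcal{L}_{\text{disc}})$.

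Next I would evaluate $g_f$ in its two regimes: when $f$ is reachable, $I_f = 1$ and $g_f = e^{-\kappa_f}$; when $f$ is unreachable, $I_f = 0$ and $g_f = 1$. Because $\kappa_f = (d_{\text{door}}/d_\triangle)^2 \ge 0$, we always have $e^{-\kappa_f} \le 1$, so toggling an item from unreachable to reachable never increases its contribution. Since $\mathcal{L}_{\text{conn}}$ and $\mathcal{L}_{\text{disc}}$ coincide except that the nonempty set $F_u \subseteq F$ of items unreachable in $\mathcal{L}_{\text{disc}}$ is reachable in $\mathcal{L}_{\text{conn}}$, the distances $d_{\text{door}}(f)$ — hence all the $\kappa_f$ — are common to both layouts, and every term with $f \notin F_u$ cancels. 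What remains is $\sum_{f \in F} g_f(\mathcal{L}_{\text{disc}}) - \sum_{f \in F} g_f(\mathcal{L}_{\text{conn}}) = \sum_{f \in F_u} \bigl(1 - e^{-\kappa_f}\bigr) \ge 0$.

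The only delicate point — and the one I expect to be the main obstacle — is upgrading this $\ge$ to a strict $>$, i.e. exhibiting at least one $f \in F_u$ with $\kappa_f > 0$, equivalently $d_{\text{door}}(f) > 0$. I would obtain this from the placement geometry: by the valid‑placement condition each placed footprint satisfies $\mathcal{T}(\mathbf{x},k;f) \subset E$ and, being a polygon, has positive area; since $\Pi(f)$ is origin‑centered, the placement center $\mathbf{x}$ is the centroid of $\mathcal{T}(\mathbf{x},k;f)$, and a positive‑area subset of the closed rectangle $E = [0,N] \times [0,M]$ cannot have its centroid on $\partial E$, because the corresponding coordinate integral over the footprint would then be strictly positive, a contradiction. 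Hence $\mathbf{x}$ lies in the interior of $E$, so $d_{\text{door}}(f) \ge \operatorname{dist}(\mathbf{x}, \partial E) > 0$ and thus $\kappa_f > 0$ for every $f$, in particular for the item(s) in $F_u$. Therefore $\sum_{f \in F_u}\bigl(1 - e^{-\kappa_f}\bigr) > 0$, and multiplying by $\frac{2}{|F|} > 0$ yields $R_{\text{path}}(\mathcal{L}_{\text{conn}}) - R_{\text{path}}(\mathcal{L}_{\text{disc}}) > 0$, as claimed. If one prefers to avoid the centroid argument, the same conclusion follows at once under the mild standing assumption that every furniture center lies at positive distance from every doorway, which holds whenever footprints are non‑degenerate.
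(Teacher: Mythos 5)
Your proof is correct and follows essentially the same route as the paper's: a term-by-term comparison showing each reachable item contributes $e^{-\kappa_f}$ to the sum while an unreachable item contributes $1$, so the connected layout's sum is strictly smaller and its reward strictly larger. The one place you go beyond the paper is in justifying strictness --- the paper simply asserts $e^{-\kappa_f} < 1$, whereas you supply the needed fact $\kappa_f > 0$ via the observation that a positive-area footprint contained in $E$ has its center in the interior of $E$, hence at positive distance from any doorway on $\partial E$; this is a worthwhile (and correct) tightening of an implicit assumption rather than a different approach.
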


\begin{proof}
    For reachable $f$, the summation term is $T_f^{\text{conn}} = e^{-\kappa_f} < 1$. For unreachable $f'$, the term becomes $T_{f'}^{\text{disc}} = 1$. Since $T_f^{\text{conn}} < T_{f'}^{\text{disc}}$, the total sum in $\mathcal{L}_{\text{conn}}$ is smaller, leading to a higher $R_{\text{path}}$.
\end{proof}

This proposition rigorously supports the guideline that layouts maintaining full accessibility and clear pathways achieve higher rewards, thereby aligning the agent's learned policy with practical interior design objectives.

\noindent \textbf{Visual Balance.}
Human observers naturally assess the visual center of mass within a room and the extent to which this mass is evenly distributed~\cite{zejnilovic2023perception}. Layouts that feature excessive clustering in a corner or a single, large, isolated object often appear unbalanced and inefficient in terms of space utilization. Let the total footprint area be denoted as $\Pi(F) = \sum_{f \in F} \Pi(f)$, and define the area-weighted center of mass as $\bar{\mathbf{x}}_F = \frac{1}{\Pi(F)} \sum_{f} \Pi(f) \mathbf{x}_f$. To measure spatial dispersion around this centroid, we define the spatial variance tensor:

\begin{definition}[Spatial Variance Tensor]
    The spatial variance tensor $\Sigma_F$ of the furniture distribution is computed as $\Sigma_F = \frac{1}{\Pi(F)} \sum_f \Pi(f)(\mathbf{x}_f-\bar{\mathbf{x}}_F)(\mathbf{x}_f-\bar{\mathbf{x}}_F)^T$.
\end{definition}

Let $\mathbf{o}$ denote the geometric center of the room. The reference spatial variance is defined as $\varkappa_E^2 = (N^2 + M^2)/12$, which corresponds to the spatial variance of a uniform mass distribution over a quadrilateral room. Based on this, the visual balance reward $R_b$ is given by:
\begin{equation*}
    R_b = \exp(-\frac{||\bar{\mathbf{x}}_F-\mathbf{o}||_2^2}{d_\triangle^2}) + \exp(-\frac{||\Sigma_F-\varkappa_E^2I||_F^2}{\varkappa_E^4}) - 1
\end{equation*}
Here, $||\cdot||_F$ denotes the Frobenius norm and $I$ is the identity matrix. The first exponential penalizes displacement of the mass center from the room center, while the second penalizes deviations of the variance tensor from the ideal reference. Thus, $R_b = +1$ is achieved when the layout is perfectly centered and evenly distributed.

\noindent \textbf{Alignment.}
The alignment constraint encourages furniture to align its long axis with the nearest wall boundary $\partial D$, either parallel or perpendicular. For each furniture $f \in F \setminus \mathcal{C}$, we define its unit orientation vector $\mathbf{u}_f$ and the tangent direction of the closest wall as $\tau_{w(f)} = \mathbf{n}_{w(f)}^\bot / \|\mathbf{n}_{w(f)}^\bot\|$, and compute the angular deviation as $\vartheta_f = \arccos\left(|\mathbf{u}_f \cdot \tau_{w(f)}|\right)$.

To capture proximity, we define the normalized wall clearance as $\omega_f = d_f / \ell_f$, where $d_f$ is the shortest back or side distance from the furniture to the wall along $\tau_{w(f)}$, and $\ell_f$ is the length of the long axis. By combining angular alignment and proximity, the alignment reward $R_{\text{al}}$ is defined as:
\begin{equation*}
    R_{\text{al}} = \frac{\sum_{f \in F \setminus \mathcal{C}} \Pi(f) \cos^2(2\vartheta_f)(1-\tanh^2 \omega_f)}{\sum_{f \in F \setminus \mathcal{C}} \Pi(f)}
\end{equation*}
The squared-cosine term reaches its maximum when furniture is aligned at $0^\circ$ or $90^\circ$ to the wall, while the hyperbolic term penalizes excessive distance. An alignment reward close to $+1$ indicates well-aligned and tightly placed furniture, whereas values near $-1$ signal misaligned or poorly positioned items. Hence, the agent is guided to place furniture with consistent orientation and wall alignment.

\noindent \textbf{Guideline Reward Function}
Since all partial rewards---$R_{\text{pair}}, R_a, R_v, R_{\text{path}}, R_b, R_{\text{al}}$---are normalized within the interval $[-1, 1]$, their arithmetic mean defines a composite reward function on a consistent scale:
\begin{equation*}
    R_{\text{idg}} = \frac{1}{6}(R_{\text{pair}} + R_a + R_v + R_{\text{path}} + R_b + R_{\text{al}}) \in [-1, 1]
\end{equation*}
To ensure consistency, we present the following lemma.

\begin{lemma}[Reward Normalization]
\label{lemma:reward}
    Each guideline reward is bounded within $[-1, 1]$. Therefore, their arithmetic mean $R_{\text{idg}}$ is also bounded within $[-1, 1]$.
\end{lemma}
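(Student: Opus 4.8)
The plan is to separate the statement into two claims: first, that each of the six guideline rewards $R_{\text{pair}}, R_a, R_v, R_{\text{path}}, R_b, R_{\text{al}}$ individually lies in $[-1,1]$; and second, that the arithmetic mean of finitely many numbers drawn from $[-1,1]$ again lies in $[-1,1]$. The second claim is immediate, since $[-1,1]$ is convex and an arithmetic mean is a convex combination: if $x_1,\dots,x_6\in[-1,1]$ then $-1=\tfrac{1}{6}\sum_{i=1}^{6}(-1)\le\tfrac{1}{6}\sum_{i=1}^{6}x_i\le\tfrac{1}{6}\sum_{i=1}^{6}1=1$. So the substance of the proof is a term-by-term bound for the first claim, after which the definition $R_{\text{idg}}=\tfrac{1}{6}(R_{\text{pair}}+R_a+R_v+R_{\text{path}}+R_b+R_{\text{al}})$ together with the second claim closes the argument.

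For each reward I would isolate its elementary building blocks, read off their ranges, and propagate the bounds through the sums and averages. The facts I would reuse are: (a) any two points of $E=[0,N]\times[0,M]$ lie within distance $d_\triangle=\sqrt{N^2+M^2}$, so $\pi d_{pc}/d_\triangle\in[0,\pi]$ and any normalized distance such as $d_{\text{door}}/d_\triangle$ lies in $[0,1]$; (b) the inner product of two unit vectors lies in $[-1,1]$, which controls $\langle\mathbf{n}_p,\mathbf{n}_c\rangle$ in $K_{\text{dir}}$ and $\langle\mathbf{n}_f,\mathbf{n}_{w(f)}\rangle$ in $R_v$; (c) $e^{-t}\in(0,1]$ for $t\ge0$, which handles $e^{-\kappa_f}$ in $R_{\text{path}}$ and both exponentials in $R_b$; (d) $\cos^2(\cdot)\in[0,1]$ and $1-\tanh^2(\cdot)\in(0,1]$, which handle the per-item factors of $R_{\text{al}}$; and (e) $\nu(f)\subseteq U_f$, so $|\nu(f)|/|U_f|\in[0,1]$ for $R_a$. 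With these, $R_v$ is an average of values in $[-1,1]$; $R_a=1-\tfrac{2}{|F|}\sum_f |\nu(f)|/|U_f|$ with the average in $[0,1]$, hence $R_a\in[-1,1]$; in $R_{\text{path}}=1-\tfrac{2}{|F|}\sum_f[(1-I_f)+e^{-\kappa_f}I_f]$ each bracket equals $1$ when $I_f=0$ and $e^{-\kappa_f}\in(0,1]$ when $I_f=1$, so the bracket lies in $(0,1]$ and $R_{\text{path}}\in[-1,1]$; $R_b$ is a sum of two terms in $(0,1]$ minus $1$, hence in $(-1,1]$; and $R_{\text{al}}$ is a convex combination with positive weights $\Pi(f)$ of factors in $[0,1]$, hence in $[0,1]$. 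Each case then reduces to direct arithmetic on these ranges.

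The term that needs the most care is $R_{\text{pair}}=\tfrac{1}{|\mathcal{P}|}\sum_{(p,c)\in\mathcal{P}}K_{\text{dist}}(p,c)\cdot K_{\text{dir}}(p,c)$, because it is the mean of a product of two kernels rather than of a single already-normalized quantity. Here I would first bound $K_{\text{dir}}=(1+\alpha_{pc}\langle\mathbf{n}_p,\mathbf{n}_c\rangle)/2\in[0,1]$ using $\alpha_{pc}\in\{-1,1\}$ and fact (b), bound $K_{\text{dist}}=1+\cos(\pi d_{pc}/d_\triangle)$ via fact (a), and then verify that the resulting product — and hence its average over $\mathcal{P}$ — stays inside $[-1,1]$. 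Checking that this product respects the bound is the one place where the normalization is delicate, and I expect it to be the main (essentially the only) obstacle; the remaining five rewards reduce to reading off ranges of elementary functions and invoking the averaging argument. Assembling the six individual bounds with that argument then yields $R_{\text{idg}}\in[-1,1]$, as claimed.
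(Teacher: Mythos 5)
Your overall strategy is sound and considerably more rigorous than the paper's own proof, which simply asserts that ``each partial reward function is explicitly designed and normalized to lie within the range $[-1,1]$'' and concludes by averaging. Your bounds for $R_a$, $R_v$, $R_{\text{path}}$, $R_b$, and $R_{\text{al}}$ all check out exactly as you describe: each reduces to reading off ranges of inner products of unit vectors, ratios of nested areas, exponentials of nonnegative arguments, and $\cos^2$/$1-\tanh^2$ factors, followed by the convex-combination argument, which is correct.

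The gap is precisely the step you flag but defer: the verification that $K_{\text{dist}}(p,c)\cdot K_{\text{dir}}(p,c)\in[-1,1]$ cannot actually be completed with the definitions as given. You correctly derive $K_{\text{dir}}\in[0,1]$, but $K_{\text{dist}}=1+\cos(\pi d_{pc}/d_\triangle)$ ranges over $[0,2]$ since $\pi d_{pc}/d_\triangle\in[0,\pi]$, so the product ranges over $[0,2]$ and exceeds $1$ whenever a parent--child pair is placed close together with near-ideal orientation --- which is exactly the configuration the reward is meant to encourage. Consequently $R_{\text{pair}}$, and hence $R_{\text{idg}}$, can exceed the claimed upper bound of $1$ unless the distance kernel is rescaled (e.g., replaced by $\tfrac{1}{2}\bigl(1+\cos(\pi d_{pc}/d_\triangle)\bigr)$ or the product divided by $2$). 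Your proof plan would stall at this point rather than close; to finish it you must either introduce such a normalization or restrict attention to configurations where $K_{\text{dist}}\cdot K_{\text{dir}}\le 1$, neither of which is licensed by the paper's stated definitions. Note that the paper's one-sentence proof silently assumes this normalization holds, so your more careful term-by-term approach has surfaced a genuine defect rather than merely leaving a routine check undone.
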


\begin{proof}
    Each partial reward function is explicitly designed and normalized to lie within the range $[-1, 1]$. As such, their arithmetic mean remains within the same bounds.
\end{proof}

In addition, any infeasible action is penalized with a terminal reward $\varphi$ and terminates the episode immediately without further reward. Thus, the agent learns to maximize the overall reward by balancing spatial feasibility, functional usability, and visual quality.
\section{Model}

\begin{figure}[tp]
    \centering
    \includegraphics[width=\linewidth]{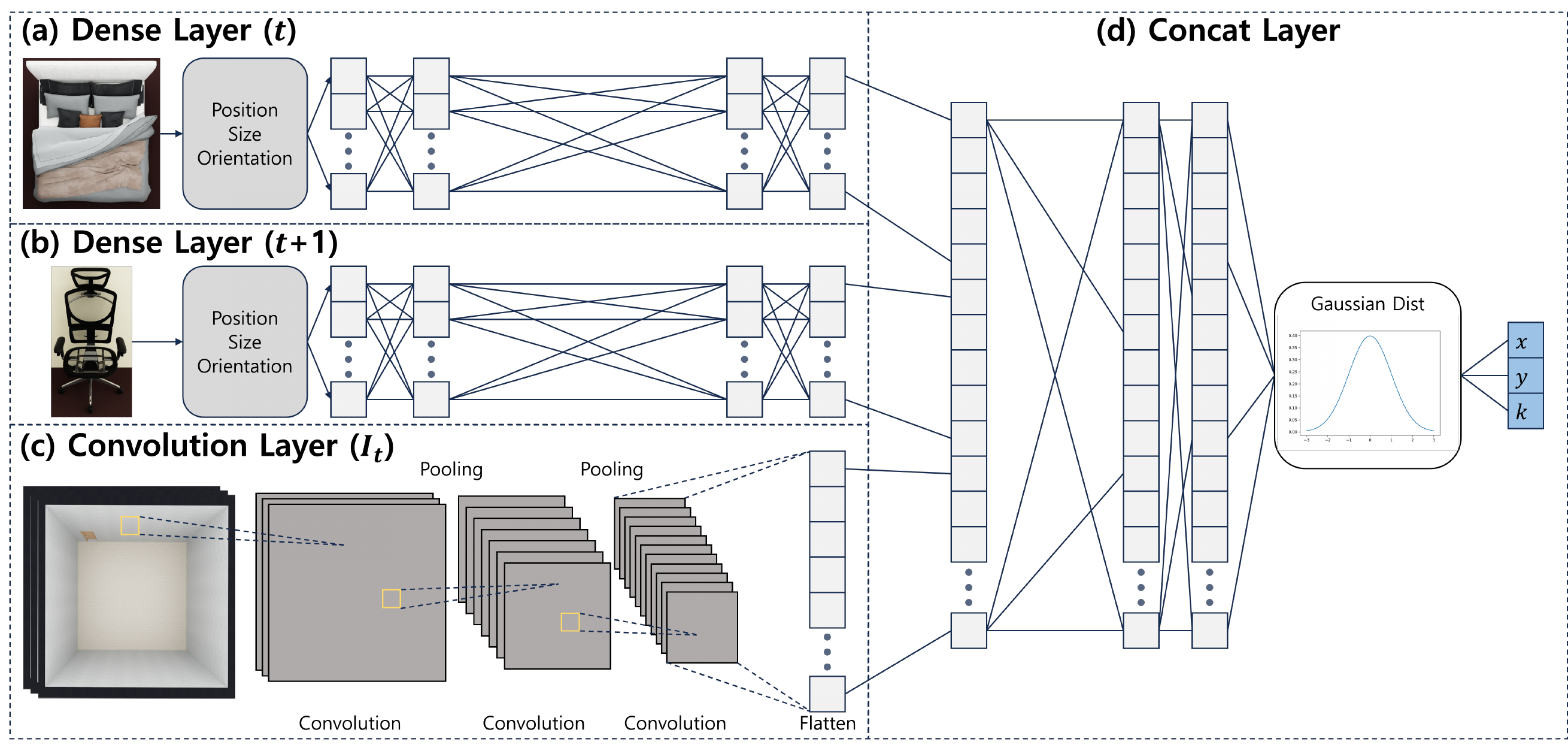}
    \caption{OID-PPO architecture: (a)–(b) embed current and next furniture via dense layers; (c) encodes the occupancy map via a CNN; (d) concatenates features and outputs a diagonal Gaussian policy for continuous action sampling. This design supports exploration under partial observability.}
    \label{fig:model}
\end{figure}

The OID-PPO framework adopts an actor-critic architecture, where the actor learns the placement policy and the critic estimates the value function. As shown in Figure~\ref{fig:model}, both the current and subsequent furniture descriptors are encoded through identical $L$-layer MLPs with a GELU activation function:
\begin{equation*}
    \psi_{\text{obj}}(\mathbf{e}) = \sigma(W_L, \cdots \sigma(W_2 \sigma(W_1\mathbf{e} + b_1) + \cdots ) +b_L)
\end{equation*}
The binary occupancy map $\mathbf{O}_t$ is embedded via a CNN to produce $\psi_\mathbf{O}$. The feature vectors from the current furniture $\psi_t$, the next furniture $\psi_{t+1}$, and the occupancy map $\psi_\mathbf{O}$ are concatenated to form the joint representation as $\mathbf{h}_t=\text{concat} \left[ \psi_t, \psi_{t+1}, \psi_\mathbf{O} \right]$. A fully connected actor head then outputs the parameters of a diagonal Gaussian policy for action sampling as $\mu_t = W_\mu \mathbf{h}_t + b_\mu$ and $\log \sigma_t = W_\sigma \mathbf{h}_t + b_\sigma$. Actions are sampled from a diagonal Gaussian policy as $a_t = \mu_t + \sigma_t \odot z$, where $z \sim \mathcal{N}(0, I)$. The critic head maps the shared embedding $\phi_t$ to a scalar value estimate using a linear projection: $V_\phi(s_t)=W_v\phi_t+b_v$.

The OID-PPO agent is trained using PPO, which stabilizes updates through clipped surrogate objectives and Generalized Advantage Estimation (GAE)~\cite{gae}. The temporal-difference error is computed as $\delta_t = R_t + \gamma V_\phi(s_{t+1}) - V_\phi(s_t)$, and the advantage estimate is $\hat{A}_t = \sum_{l=0}^{T-1} (\gamma \lambda)^l \delta_{t+l}$. Letting $r_t(\theta) = \pi_\theta(a_t|s_t) / \pi_{\theta_{\text{old}}}(a_t|s_t)$ denote the probability ratio, the clipped PPO objective over minibatch $\mathcal{B}$ is defined as:
{\footnotesize
\begin{equation*}
    \mathcal{L}^{\text{clip}}(\theta) = \frac{1}{|\mathcal{B}|} \sum_{t \in \mathcal{B}} \min(r_t(\theta)\hat{A_t}, \text{clip}(r_t(\theta), 1-\epsilon, 1+\epsilon)\hat{A_t})
\end{equation*}
}
Here, $\epsilon$ denotes the clipping threshold. The total loss combines policy, value, and entropy terms as follows:
\begin{equation*}
    \mathcal{L}(\theta, \phi) = -\mathcal{L}^{\text{clip}}(\theta) + c_v\mathcal{L}^{VF}(\phi)-c_e\mathcal{H}(\pi_\theta)
\end{equation*}
where the value function loss is defined as $\mathcal{L}^{VF}(\phi) = \frac{1}{|\mathcal{B}|} \sum_{t \in \mathcal{B}} (V_\phi(s_t) - R_t^\lambda)^2$, and $\mathcal{H}(\pi_\theta)$ denotes the differentiable entropy of the policy. Based on this loss, the gradients for the actor and critic are computed as:
\begin{equation*}
g_\theta = \nabla_\theta (-L^{\text{clip}}(\theta)-c_e \mathcal{H}(\pi_\theta)), \; g_\phi = c_v \nabla_\phi L^{VF}(\phi)    
\end{equation*}
Parameter updates leverage the Adam optimizer with bias correction:
\begin{equation*}
\begin{array}{c}
    m \leftarrow \beta_1 m + (1-\beta_1)g, \; v \leftarrow \beta_2v + (1-\beta_2)g^2\\
    \theta \leftarrow \theta - \eta_\theta \frac{m_\theta}{\sqrt{v_\theta} + \epsilon_{\text{adam}}}, \quad \phi \leftarrow \phi - \eta_\phi \frac{m_\phi}{\sqrt{v_\phi} + \epsilon_{\text{adam}}}
\end{array}
\end{equation*}
The diagonal Gaussian policy and the PPO optimization procedure exhibit key theoretical properties, which are formally described in the following propositions.

\begin{proposition}[Monotonic Policy Improvement]
\label{prop:monotonic}
    The PPO clipped surrogate objective guarantees monotonic improvement in policy performance, provided that the step sizes are sufficiently small.
\end{proposition}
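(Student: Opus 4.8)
The plan is to transport the classical TRPO surrogate-improvement argument---which PPO's clipped objective is designed to implement---into our finite-horizon MDP $\mathcal{M}$: lower-bound the true return change by the surrogate gain minus a KL penalty, and then show the penalty is second order in the step size while the surrogate gain is first order. First I would invoke the performance-difference identity for the discounted-return objective $J(\pi)=\E_\pi\!\left[\sum_{t=0}^{H}\gamma^{t}R(s_t,a_t)\right]$,
\[
  J(\pi_\theta)-J(\pi_{\theta_{\text{old}}})=\E_{\tau\sim\pi_\theta}\!\left[\sum_{t=0}^{H}\gamma^{t}A^{\pi_{\theta_{\text{old}}}}(s_t,a_t)\right],
\]
and then replace the new-policy trajectory distribution by the old-policy state-visitation distribution $d^{\pi_{\theta_{\text{old}}}}$ to obtain the usual surrogate $L_{\theta_{\text{old}}}(\theta)=\E_{s\sim d^{\pi_{\theta_{\text{old}}}},\,a\sim\pi_{\theta_{\text{old}}}}\!\left[r_t(\theta)\,A^{\pi_{\theta_{\text{old}}}}(s,a)\right]$, which yields the standard lower bound
\[
  J(\pi_\theta)\ \ge\ L_{\theta_{\text{old}}}(\theta)-C\max_{s}D_{\mathrm{KL}}\!\left(\pi_{\theta_{\text{old}}}(\cdot\mid s)\,\|\,\pi_\theta(\cdot\mid s)\right),
\]
with $C$ a finite constant: finite because the effective horizon is at most $H\le|F|$ by Proposition~\ref{prop:horizon} and the per-step rewards, hence the advantages $A^{\pi_{\theta_{\text{old}}}}$, are uniformly bounded by Lemma~\ref{lemma:reward}.

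Next I would relate the clipped objective $\mathcal{L}^{\text{clip}}(\theta)$ to this surrogate in a neighborhood of $\theta_{\text{old}}$. Since $r_t(\theta_{\text{old}})=1$ lies strictly inside the clip band $[1-\epsilon,1+\epsilon]$ for every collected sample, and each $r_t(\cdot)$ is continuous in $\theta$, no sample gets clipped for all sufficiently small parameter steps; hence on such a neighborhood $\mathcal{L}^{\text{clip}}$ agrees with $L_{\theta_{\text{old}}}$, so in particular $\mathcal{L}^{\text{clip}}(\theta_{\text{old}})=L_{\theta_{\text{old}}}(\theta_{\text{old}})=0$ and $\nabla_\theta\mathcal{L}^{\text{clip}}(\theta_{\text{old}})=\nabla_\theta L_{\theta_{\text{old}}}(\theta_{\text{old}})$, which (modulo the entropy term) is exactly the direction driving the actor update $g_\theta$. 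Writing the Adam step as $\theta=\theta_{\text{old}}-\eta_\theta\,P_\theta\,g_\theta+\mathcal{O}(\eta_\theta^{2})$ for a diagonal positive-definite preconditioner $P_\theta$ (the entropy coefficient $c_e$ being taken small enough that its contribution is only a lower-order correction), a first-order Taylor expansion gives
\[
  \mathcal{L}^{\text{clip}}(\theta)-\mathcal{L}^{\text{clip}}(\theta_{\text{old}})=\eta_\theta\,\langle\nabla_\theta\mathcal{L}^{\text{clip}}(\theta_{\text{old}}),\,P_\theta\,\nabla_\theta\mathcal{L}^{\text{clip}}(\theta_{\text{old}})\rangle+\mathcal{O}(\eta_\theta^{2})\ \ge\ 0
\]
once $\eta_\theta$ is small enough that the remainder is dominated. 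Simultaneously, smoothness of the diagonal Gaussian policy in $\theta$ yields the local quadratic expansion $D_{\mathrm{KL}}(\pi_{\theta_{\text{old}}}\|\pi_\theta)=\tfrac12(\theta-\theta_{\text{old}})^{\top}\mathcal{I}(\theta_{\text{old}})(\theta-\theta_{\text{old}})+o(\|\theta-\theta_{\text{old}}\|^{2})=\mathcal{O}(\eta_\theta^{2})$, where $\mathcal{I}$ is the Fisher information matrix of $\pi_\theta$.

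Combining the two estimates, and using $\mathcal{L}^{\text{clip}}(\theta_{\text{old}})=0$, gives
\[
  J(\pi_\theta)-J(\pi_{\theta_{\text{old}}})\ \ge\ \eta_\theta\,\langle\nabla_\theta\mathcal{L}^{\text{clip}}(\theta_{\text{old}}),\,P_\theta\,\nabla_\theta\mathcal{L}^{\text{clip}}(\theta_{\text{old}})\rangle-C\cdot\mathcal{O}(\eta_\theta^{2}).
\]
Whenever $\nabla_\theta\mathcal{L}^{\text{clip}}(\theta_{\text{old}})\neq 0$ the linear term is strictly positive and dominates for every $\eta_\theta$ below a threshold fixed by the Lipschitz constants of the advantage function, the policy, and the preconditioner, so $J(\pi_\theta)>J(\pi_{\theta_{\text{old}}})$; at a stationary point the surrogate gradient vanishes, the bound becomes non-strict, and improvement has converged. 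I expect the main obstacle to be making the ``no sample is clipped near $\theta_{\text{old}}$'' step fully rigorous, since $\mathcal{L}^{\text{clip}}$ is only piecewise smooth: I would argue via one-sided directional derivatives together with the continuity of each ratio $r_t(\cdot)$ that the active clipping set is empty along the update ray for all small enough $\eta_\theta$, so that the Taylor expansions above are legitimate. A secondary point worth stating explicitly is that the constant $C$ is genuinely finite in our partially observed, finite-horizon setting, which once more reduces to the boundedness of $R$ (Lemma~\ref{lemma:reward}) and of the horizon (Proposition~\ref{prop:horizon}).
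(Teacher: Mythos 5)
The paper never actually proves this proposition: it is stated bare, and Appendix~B only \emph{invokes} it as an assumption in the proof of Theorem~1, so there is no in-paper argument to compare yours against. Your proposal is the standard TRPO-style justification (performance-difference lemma, surrogate-minus-KL lower bound, first-order surrogate gain versus second-order KL penalty), and as a sketch it is the right way to substantiate the claim. The one genuinely PPO-specific ingredient --- that $r_t(\theta_{\text{old}})=1$ lies strictly inside the clip band, so no sample is clipped in a neighborhood of $\theta_{\text{old}}$ and $\mathcal{L}^{\text{clip}}$ locally coincides with the unclipped surrogate, dissolving the piecewise-smoothness worry --- is correct and well identified.

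Two genuine gaps remain. First, the lower bound $J(\pi_\theta)\ge L_{\theta_{\text{old}}}(\theta)-C\max_s D_{\mathrm{KL}}\left(\pi_{\theta_{\text{old}}}(\cdot\mid s)\,\|\,\pi_\theta(\cdot\mid s)\right)$ is a statement about the \emph{population} surrogate, i.e.\ the expectation over the true visitation distribution $d^{\pi_{\theta_{\text{old}}}}$, whereas the quantity PPO ascends is a minibatch average over finitely many sampled trajectories. Your chain of inequalities silently identifies the two, so what you actually obtain is improvement in expectation (or up to an unquantified sampling-error term), not a deterministic guarantee; this is precisely why the honest version of the claim --- and the one Theorem~1 ultimately relies on --- is ``non-decreasing up to an $O(\epsilon)$ bias'' rather than strict monotonicity. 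You should either restate the proposition for exact expectations or carry the estimation error explicitly. Second, PPO runs several gradient epochs on the same batch, and your argument controls only the first small step from $\theta_{\text{old}}$: after that step the ratios need no longer sit inside the clip band, the local identification of $\mathcal{L}^{\text{clip}}$ with $L_{\theta_{\text{old}}}$ fails, and clipping (which caps the objective but does not constrain the iterate) no longer bounds $\max_s D_{\mathrm{KL}}$. Since the proposition is phrased for ``sufficiently small step sizes,'' restricting to a single small update is defensible, but you should say so explicitly; you should also note that bounding $\max_s D_{\mathrm{KL}}$ by $O(\eta_\theta^{2})$ \emph{uniformly over states} requires the Fisher information of the state-conditioned diagonal Gaussian to be bounded over the state set, which needs the network outputs (in particular $\log\sigma_t$) to be bounded.
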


\begin{proposition}[Guideline-Aware Exploration]
\label{prop:exploration}
    Because the diagonal Gaussian policy maintains nonzero variance, the OID-PPO agent guarantees a strictly positive probability of exploring all valid furniture placements.
\end{proposition}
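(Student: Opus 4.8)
The plan is to exhibit, for any reachable state $s_t$ with nonempty feasible set $A_t$, a subset of valid placements to which the diagonal Gaussian policy assigns strictly positive probability, and then to lift this to whole episodes using the finite horizon of Proposition~\ref{prop:horizon}. First I would record the elementary fact that the actor head produces finite outputs: since the shared feature $\mathbf{h}_t$ is obtained from bounded inputs through continuous (GELU, linear, convolutional) maps with finite weights, both $\mu_t = W_\mu \mathbf{h}_t + b_\mu$ and $\log\sigma_t = W_\sigma\mathbf{h}_t + b_\sigma$ are finite, hence every coordinate variance $\sigma_{t,i} = \exp((\log\sigma_t)_i)$ is strictly positive and finite. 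Consequently the diagonal Gaussian density $\pi_\theta(\,\cdot\,\mid s_t)$ is strictly positive at every point of its Euclidean domain, so $\pi_\theta(B\mid s_t) = \int_B \pi_\theta(a\mid s_t)\,da > 0$ for every Borel set $B$ of positive Lebesgue measure.

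Next I would argue that the set of valid placements itself carries positive measure. Fix a valid action $a^{*} = (\mathbf{x}^{*}, k^{*}) \in A_t$ whose transformed footprint lies in the interior of $E$ and at positive distance from the occupancy region $\mathcal{O}_t$ (the generic situation; a placement that merely touches $\partial E$ or $\mathcal{O}_t$ is approximated by such strict placements). Because $\mathcal{T}(\mathbf{x},k^{*};f_t) = \mathbf{z}_{k^{*}}\Pi(f_t) + \mathbf{x}$ depends on $\mathbf{x}$ by a pure translation, the two conditions $\mathcal{T}(\mathbf{x},k^{*};f_t)\subset E$ and $\mathcal{T}(\mathbf{x},k^{*};f_t)\cap\mathcal{O}_t=\emptyset$ persist on an open ball $B_\delta(\mathbf{x}^{*})$, so $B_\delta(\mathbf{x}^{*})\times\{k^{*}\}\subseteq A_t$. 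Combining this with the full-support property gives $\pi_\theta(A_t\mid s_t)\ge \pi_\theta(B_\delta(\mathbf{x}^{*})\times\{k^{*}\}\mid s_t) > 0$, and more sharply, the policy samples an action within any prescribed tolerance $\varepsilon$ of $a^{*}$ with positive probability. For the discrete rotation I would note that whichever mechanism maps the Gaussian sample to $k_t\in\{0,1,2,3\}$ (binning a continuous coordinate, or a softmax categorical head) assigns each of the four rotations strictly positive probability, so the argument extends to arbitrary $k^{*}$.

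Finally I would assemble the episode-level claim: given any target sequence of valid placements $(a_0^{*},\dots,a_{H-1}^{*})$ with $H\le|F|$ by Proposition~\ref{prop:horizon}, the deterministic transition $P$ makes the state at step $t$ a function of the earlier choices, so by the Markov property the probability that the agent realizes this sequence (up to tolerance $\varepsilon$ per step) factorizes into a finite product of the per-step probabilities $\pi_\theta(\,\cdot\,\mid s_t)$, each shown positive above; a finite product of positive numbers is positive. Hence every valid layout is explored with strictly positive probability.

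I expect the main obstacle to be conceptual rather than computational: making precise what ``exploring a placement'' means, since any individual continuous action has probability zero under a density. The statement must therefore be read as a positive probability of sampling within every positive-measure (equivalently, every nonempty open) set of valid placements, which forces the auxiliary argument that $A_t$ has nonempty interior whenever it is nonempty --- the one place a genuine, though benign, non-degeneracy assumption enters, namely that feasibility is witnessed by strict rather than merely tangential placements. A secondary subtlety is reconciling the continuous diagonal Gaussian with the discrete rotation index and with any feasibility projection or clipping applied to the raw sample; I would handle this by treating the clipped policy as the renormalization of the Gaussian restricted to $A_t$, whose normalizing mass is exactly the positive quantity $\pi_\theta(A_t\mid s_t)$ established above.
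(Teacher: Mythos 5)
The paper never actually proves this proposition: it is asserted as-is, with the only justification being the clause embedded in the statement itself (``because the diagonal Gaussian policy maintains nonzero variance''), and it is later invoked in Appendix~B, Step~6, again without argument. Your proposal therefore supplies substantially more than the paper does, and what you supply is essentially correct: the chain (finite actor-head outputs $\Rightarrow$ strictly positive, finite $\sigma_{t,i}$ $\Rightarrow$ full-support density $\Rightarrow$ positive probability on any positive-measure subset of $A_t$), the observation that translation-invariance of $\mathcal{T}(\cdot,k;f)$ makes the feasibility conditions open in $\mathbf{x}$ so that $A_t$ has nonempty interior under a mild non-degeneracy assumption, and the finite-horizon product argument via Proposition~\ref{prop:horizon} are all sound. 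Most valuably, you identify the two issues the paper silently glosses over: (i) ``exploring all valid placements'' cannot literally mean every individual action, since any single point has probability zero under a density, so the claim must be read on positive-measure (or nonempty open) subsets of $A_t$; and (ii) the continuous Gaussian must somehow be reconciled with the discrete rotation index $k_t\in\{0,1,2,3\}$ and with any clipping or projection onto $A_t$, neither of which the paper's architecture description resolves. Your only genuine assumption --- that feasibility, when it holds, is witnessed by a strict (interior) placement --- is honestly flagged and is indeed necessary: if $A_t$ consisted solely of tangential placements it would be Lebesgue-null and the full-support argument would give probability zero. In short, your proof is a correct and complete substitute for a proof the paper does not contain; the paper's ``approach'' is simply to restate the hypothesis as the conclusion.
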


These propositions ensure both stable policy optimization and sufficient exploration during training. Finally, we integrate the conditions established in the preceding sections to provide theoretical convergence guarantees for the OID-PPO model.

\begin{theorem}[Convergence of OID-PPO]
    Consider the finite episodic MDP formulation of the OID problem, defined as $\mathcal{M} = \langle S,A,P,R,\gamma\rangle$, where $\gamma \in (0,1]$ and the horizon of episodes $H=|F|\in\mathbb{N}$ is finite. 
    Let $\{\pi_{\theta_k}\}_{k\ge 0}$ denote the parameterized policy sequence by $\theta\in\mathbb{R}^n$, which is updated through PPO using the clipped surrogate objective $L_{\text{clip}}(\theta)$ and GAE. The PPO gradient update is explicitly given by $g_{\theta} = \nabla_{\theta}\left(-L_{\text{clip}}(\theta) - c_e H(\pi_{\theta})\right)$ without additional noise terms and the parameters are updated using the Adam optimizer with bias correction as previously described.

    Assume each episode terminates within at most $H=|F|$ steps (Proposition~\ref{prop:horizon}); the composite reward $R_{\text{idg}}(s_t,a_t)$ is bounded within $[-1, 1]$, with a finite penalty $\phi>-\infty$ for invalid actions (Lemma~\ref{lemma:reward}); the diagonal Gaussian policy maintains strictly positive variance at all times for persistent exploration (Proposition~\ref{prop:exploration}); the PPO clipped surrogate $L_{\text{clip}}(\theta)$ ensures monotonic policy improvement with bounded bias, satisfying $|J(\theta)-L_{\text{clip}}(\theta)| \leq O(\epsilon)$ (Proposition~\ref{prop:monotonic}); and $L_{\text{clip}}(\theta)$ is continuously differentiable with Lipschitz-continuous gradients on compact parameter sets.

    Then, the policy parameters $\theta_k$ almost surely converge to a locally optimal parameter $\theta^*$, such that the true return satisfies $J(\theta_k)\xrightarrow{\text{a.s.}}J(\theta^*)$, where $\theta^*\in\arg\max_\theta L_{\text{clip}}(\theta)$. Furthermore, the expected return sequence $J(\theta_k)$ is monotonically non-decreasing up to an $O(\epsilon)$ clipping bias.
\end{theorem}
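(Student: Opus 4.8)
The plan is to read the OID-PPO training loop as bias-corrected-Adam stochastic gradient ascent on the clipped surrogate $L_{\text{clip}}$ (entropy-regularized by $c_e\mathcal{H}(\pi_\theta)$), and then to assemble the listed hypotheses --- Proposition~\ref{prop:horizon}, Lemma~\ref{lemma:reward}, Proposition~\ref{prop:exploration}, Proposition~\ref{prop:monotonic}, together with the smoothness assumption --- into a standard nonconvex stochastic-approximation convergence argument; the conclusion about the true return $J$ is then transferred from the surrogate using Proposition~\ref{prop:monotonic} and the $O(\epsilon)$ bias bound.

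First I would establish the uniform bounds that the stochastic-approximation machinery requires. Because the horizon is finite, $H=|F|$ (Proposition~\ref{prop:horizon}), each per-step reward lies in $[-1,1]$, and the invalid-action penalty is finite (Lemma~\ref{lemma:reward}), every episode return lies in one fixed compact interval; hence $J(\theta)$ is uniformly bounded and, since the critic output is a bounded function of bounded targets, the GAE targets $R_t^\lambda$ and advantages $\hat A_t$ are uniformly bounded over all parameters and all sampled trajectories. On the compact parameter set on which Lipschitz-continuity of $\nabla L_{\text{clip}}$ is posited, the diagonal-Gaussian score $\nabla_\theta\log\pi_\theta(a\mid s)$ is continuous, and because the entropy term $c_e\mathcal{H}(\pi_\theta)=c_e\sum_i\log\sigma_{t,i}+\text{const}$ diverges to $-\infty$ whenever any $\sigma_{t,i}\to 0$, the optimization keeps $\log\sigma_t$ bounded away from $-\infty$ --- this is precisely the persistent-exploration property of Proposition~\ref{prop:exploration} --- so, together with the PPO clip, the minibatch gradient $g_\theta$ is an unbiased, bounded-variance stochastic estimate of $\nabla_\theta(-L_{\text{clip}}-c_e\mathcal{H})$.

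Next I would invoke convergence to a stationary point and then transfer to $J$. With $L_{\text{clip}}$ continuously differentiable and Lipschitz-smooth on the compact set (hypothesis) and with the bounded-variance stochastic gradient just obtained, the bias-corrected Adam recursion is an instance of adaptive stochastic gradient ascent on a smooth nonconvex objective; under a Robbins--Monro-type schedule ($\sum_k\eta_k=\infty$, $\sum_k\eta_k^2<\infty$) the standard result yields $\nabla_\theta L_{\text{clip}}(\theta_k)\to 0$ almost surely and almost-sure convergence of $\{\theta_k\}$ to the stationary set of $L_{\text{clip}}$; because every sufficiently small PPO step ascends $L_{\text{clip}}$ and the surrogate is bounded above on the compact set, the limit $\theta^*$ is a local maximizer, i.e.\ $\theta^*\in\arg\max_\theta L_{\text{clip}}(\theta)$ in the local sense claimed. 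Continuity of $J$ together with $\theta_k\to\theta^*$ then gives $J(\theta_k)\xrightarrow{\text{a.s.}}J(\theta^*)$, while the qualitative monotonicity follows separately: by Proposition~\ref{prop:monotonic} each small PPO step improves the policy return up to the $O(\epsilon)$ clipping bias, so $J(\theta_{k+1})\ge J(\theta_k)-O(\epsilon)$, i.e.\ $J(\theta_k)$ is monotonically non-decreasing up to the $O(\epsilon)$ bias.

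The hard part will be the stationarity step. Vanilla Adam is known to diverge on some nonconvex problems, so the argument has to cite a nonconvex adaptive-SGD convergence theorem whose hypotheses (bounded stochastic gradients, decaying effective step size, bias-corrected moments) match exactly what the boundedness of the first step provides, or otherwise retreat to a comparison with plain projected SGD. A secondary subtlety is the passage from ``stationary point'' to ``(local) $\arg\max$'': this cannot rely on a strict-saddle analysis here, so it must be inherited from the ascent property of Proposition~\ref{prop:monotonic}, which prevents the actual iterate sequence from stalling at a saddle; if one is unwilling to grant this, the cleanest honest statement is convergence to the stationary set. The remaining bookkeeping is to ensure the iterates never leave the compact set on which smoothness and the score bounds hold, handled by an explicit projection step or by the standard assumption that the iterates remain bounded.
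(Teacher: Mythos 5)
Your proposal follows essentially the same route as the paper's Appendix~B proof: bounded returns from the finite horizon and reward bounds, bounded gradient estimates, a Robbins--Monro stochastic-approximation argument (the paper phrases it via the ODE method and a Lyapunov function $V(\theta)=-L_{\text{clip}}(\theta)$, you via a nonconvex adaptive-SGD convergence theorem) yielding almost-sure convergence to the stationary set of $L_{\text{clip}}$, and then transfer to $J$ via the $O(\epsilon)$ surrogate bias and Proposition~\ref{prop:monotonic}. Your closing caveats --- Adam's known nonconvergence on nonconvex problems, the unjustified leap from stationarity to $\arg\max$, and the need to keep iterates in the compact set --- are gaps the paper's sketch shares and does not resolve, so you are if anything more candid about what the argument actually establishes.
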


\begin{proof}
    See Appendix~B. This convergence result provides strong theoretical support for the stability and reliability of our RL-based interior design framework.
\end{proof}
\section{Experiment}

\subsection{Environment Setting}
We implement OID-PPO in PyTorch and deploy it within the OpenAI Gym environment. The model encodes the binary occupancy map $\mathbf{O}_t$ through convolutional layers and embeds the current and next furniture descriptors, $\mathbf{e}_t$ and $\mathbf{e}_{t+1}$, via dense layers. These three outputs are concatenated and passed through fully connected layers to predict the mean $\mu$ and standard deviation $\sigma$ of a diagonal Gaussian action distribution. For training, we set $\gamma=0.99$, GAE $\lambda=0.95$, learning rates $\eta_a=10^{-4}$ and $\eta_c=10^{-3}$ for the actor and critic, clipping ratio $\epsilon=0.2$, penalty $\varphi=-10$, and epochs to 1000. To evaluate generalization, each episode randomly samples one of four room shapes---square, rectangular, L-shape, or U-shape---and a furniture count from $F_n \in \{4, 6, 8\}$.

\begin{table*}[h]
\centering
\caption{Performance comparison of various interior-layout models.
\textcolor{black}{%P-Loss is policy loss, V-Loss is value loss.
P-Loss is the policy loss, and V-Loss is the value loss.}}
\label{table:oidppo}
\renewcommand{\arraystretch}{1.2}\setlength{\tabcolsep}{1mm}\small
\begin{tabular}{cc||cccc||cccc||cccc}
\hline\hline\hline
\multicolumn{2}{l||}{\multirow{2}{*}{}} &
  \multicolumn{4}{c||}{$F_n=4$} &
  \multicolumn{4}{c||}{$F_n=6$} &
  \multicolumn{4}{c}{$F_n=8$}\\ \cline{3-14}
\multicolumn{2}{l||}{} &
  \multicolumn{1}{c|}{Time (s)} & \multicolumn{1}{c|}{P-Loss} &
  \multicolumn{1}{c|}{V-Loss} & Reward &
  \multicolumn{1}{c|}{Time (s)} & \multicolumn{1}{c|}{P-Loss} &
  \multicolumn{1}{c|}{V-Loss} & Reward &
  \multicolumn{1}{c|}{Time (s)} & \multicolumn{1}{c|}{P-Loss} &
  \multicolumn{1}{c|}{V-Loss} & Reward \\ \hline\hline
%──────────────────────────── MH ────────────────────────────
\multicolumn{1}{l|}{\multirow{4}{*}{\rotatebox[origin=c]{90}{\textbf{MH}}}}&
 Square    & \multicolumn{1}{c|}{70.28} & \multicolumn{1}{c|}{–} & \multicolumn{1}{c|}{–} & 0.281 &
              \multicolumn{1}{c|}{94.92} & \multicolumn{1}{c|}{–} & \multicolumn{1}{c|}{–} & 0.214 &
              \multicolumn{1}{c|}{123.4} & \multicolumn{1}{c|}{–} & \multicolumn{1}{c|}{–} & 0.126\\ \cline{2-14}
\multicolumn{1}{l|}{}&
 Rectangle & \multicolumn{1}{c|}{63.79} & \multicolumn{1}{c|}{–} & \multicolumn{1}{c|}{–} & 0.265 &
              \multicolumn{1}{c|}{102.2} & \multicolumn{1}{c|}{–} & \multicolumn{1}{c|}{–} & 0.198 &
              \multicolumn{1}{c|}{131.2} & \multicolumn{1}{c|}{–} & \multicolumn{1}{c|}{–} & 0.143\\ \cline{2-14}
\multicolumn{1}{l|}{}&
 L-Shape   & \multicolumn{1}{c|}{83.12} & \multicolumn{1}{c|}{–} & \multicolumn{1}{c|}{–} & 0.237 &
              \multicolumn{1}{c|}{115.7} & \multicolumn{1}{c|}{–} & \multicolumn{1}{c|}{–} & 0.165 &
              \multicolumn{1}{c|}{138.4} & \multicolumn{1}{c|}{–} & \multicolumn{1}{c|}{–} & 0.112\\ \cline{2-14}
\multicolumn{1}{l|}{}&
 U-Shape   & \multicolumn{1}{c|}{77.64} & \multicolumn{1}{c|}{–} & \multicolumn{1}{c|}{–} & 0.221 &
              \multicolumn{1}{c|}{109.6} & \multicolumn{1}{c|}{–} & \multicolumn{1}{c|}{–} & 0.135 &
              \multicolumn{1}{c|}{145.6} & \multicolumn{1}{c|}{–} & \multicolumn{1}{c|}{–} & 0.105\\ \hline\hline
%─────────────────────────── MOPSO ──────────────────────────
\multicolumn{1}{l|}{\multirow{4}{*}{\rotatebox[origin=c]{90}{\textbf{MOPSO}}}}&
 Square    & \multicolumn{1}{c|}{96.82} & \multicolumn{1}{c|}{–} & \multicolumn{1}{c|}{–} & 0.338 &
              \multicolumn{1}{c|}{150.8} & \multicolumn{1}{c|}{–} & \multicolumn{1}{c|}{–} & 0.259 &
              \multicolumn{1}{c|}{184.7} & \multicolumn{1}{c|}{–} & \multicolumn{1}{c|}{–} & 0.207\\ \cline{2-14}
\multicolumn{1}{l|}{}&
 Rectangle & \multicolumn{1}{c|}{92.47} & \multicolumn{1}{c|}{–} & \multicolumn{1}{c|}{–} & 0.352 &
              \multicolumn{1}{c|}{144.4} & \multicolumn{1}{c|}{–} & \multicolumn{1}{c|}{–} & 0.271 &
              \multicolumn{1}{c|}{191.4} & \multicolumn{1}{c|}{–} & \multicolumn{1}{c|}{–} & 0.192\\ \cline{2-14}
\multicolumn{1}{l|}{}&
 L-Shape   & \multicolumn{1}{c|}{109.7} & \multicolumn{1}{c|}{–} & \multicolumn{1}{c|}{–} & 0.305 &
              \multicolumn{1}{c|}{159.3} & \multicolumn{1}{c|}{–} & \multicolumn{1}{c|}{–} & 0.214 &
              \multicolumn{1}{c|}{208.0} & \multicolumn{1}{c|}{–} & \multicolumn{1}{c|}{–} & 0.162\\ \cline{2-14}
\multicolumn{1}{l|}{}&
 U-Shape   & \multicolumn{1}{c|}{103.5} & \multicolumn{1}{c|}{–} & \multicolumn{1}{c|}{–} & 0.281 &
              \multicolumn{1}{c|}{166.4} & \multicolumn{1}{c|}{–} & \multicolumn{1}{c|}{–} & 0.223 &
              \multicolumn{1}{c|}{199.8} & \multicolumn{1}{c|}{–} & \multicolumn{1}{c|}{–} & 0.157\\ \hline\hline
%──────────────────────────── DDPG ──────────────────────────
\multicolumn{1}{l|}{\multirow{4}{*}{\rotatebox[origin=c]{90}{\textbf{DDPG}}}}&
 Square    & \multicolumn{1}{c|}{1.247} & \multicolumn{1}{c|}{0.065} & \multicolumn{1}{c|}{0.233} & 0.803 &
              \multicolumn{1}{c|}{\textbf{1.698}} & \multicolumn{1}{c|}{0.087} & \multicolumn{1}{c|}{0.289} & 0.783 &
              \multicolumn{1}{c|}{\textbf{2.318}} & \multicolumn{1}{c|}{0.109} & \multicolumn{1}{c|}{0.337} & 0.639\\ \cline{2-14}
\multicolumn{1}{l|}{}&
 Rectangle & \multicolumn{1}{c|}{1.268} & \multicolumn{1}{c|}{0.068} & \multicolumn{1}{c|}{0.238} & 0.812 &
              \multicolumn{1}{c|}{\textbf{1.792}} & \multicolumn{1}{c|}{0.089} & \multicolumn{1}{c|}{0.296} & 0.792 &
              \multicolumn{1}{c|}{\textbf{2.389}} & \multicolumn{1}{c|}{0.111} & \multicolumn{1}{c|}{0.343} & 0.648\\ \cline{2-14}
\multicolumn{1}{l|}{}&
 L-Shape   & \multicolumn{1}{c|}{\textbf{1.481}} & \multicolumn{1}{c|}{0.072} & \multicolumn{1}{c|}{0.250} & 0.769 &
              \multicolumn{1}{c|}{\textbf{2.017}} & \multicolumn{1}{c|}{0.093} & \multicolumn{1}{c|}{0.304} & 0.755 &
              \multicolumn{1}{c|}{3.036} & \multicolumn{1}{c|}{0.114} & \multicolumn{1}{c|}{0.353} & 0.618\\ \cline{2-14}
\multicolumn{1}{l|}{}&
 U-Shape   & \multicolumn{1}{c|}{\textbf{1.608}} & \multicolumn{1}{c|}{0.075} & \multicolumn{1}{c|}{0.259} & 0.758 &
              \multicolumn{1}{c|}{\textbf{2.158}} & \multicolumn{1}{c|}{0.096} & \multicolumn{1}{c|}{0.315} & 0.745 &
              \multicolumn{1}{c|}{\textbf{2.829}} & \multicolumn{1}{c|}{0.117} & \multicolumn{1}{c|}{0.364} & 0.609\\ \hline\hline
%──────────────────────────── TD3 ───────────────────────────
\multicolumn{1}{l|}{\multirow{4}{*}{\rotatebox[origin=c]{90}{\textbf{TD3}}}}&
 Square    & \multicolumn{1}{c|}{\textbf{1.181}} & \multicolumn{1}{c|}{0.061} & \multicolumn{1}{c|}{0.206} & 0.823 &
              \multicolumn{1}{c|}{1.923} & \multicolumn{1}{c|}{0.082} & \multicolumn{1}{c|}{0.256} & 0.802 &
              \multicolumn{1}{c|}{2.887} & \multicolumn{1}{c|}{0.103} & \multicolumn{1}{c|}{0.299} & 0.671\\ \cline{2-14}
\multicolumn{1}{l|}{}&
 Rectangle & \multicolumn{1}{c|}{\textbf{1.153}} & \multicolumn{1}{c|}{0.063} & \multicolumn{1}{c|}{0.210} & 0.803 &
              \multicolumn{1}{c|}{1.862} & \multicolumn{1}{c|}{0.084} & \multicolumn{1}{c|}{0.263} & 0.787 &
              \multicolumn{1}{c|}{2.964} & \multicolumn{1}{c|}{0.105} & \multicolumn{1}{c|}{0.306} & 0.661\\ \cline{2-14}
\multicolumn{1}{l|}{}&
 L-Shape   & \multicolumn{1}{c|}{1.603} & \multicolumn{1}{c|}{0.067} & \multicolumn{1}{c|}{0.221} & 0.792 &
              \multicolumn{1}{c|}{2.427} & \multicolumn{1}{c|}{0.088} & \multicolumn{1}{c|}{0.273} & 0.770 &
              \multicolumn{1}{c|}{\textbf{3.012}} & \multicolumn{1}{c|}{0.110} & \multicolumn{1}{c|}{0.318} & 0.641\\ \cline{2-14}
\multicolumn{1}{l|}{}&
 U-Shape   & \multicolumn{1}{c|}{1.714} & \multicolumn{1}{c|}{0.070} & \multicolumn{1}{c|}{0.231} & 0.775 &
              \multicolumn{1}{c|}{2.509} & \multicolumn{1}{c|}{0.091} & \multicolumn{1}{c|}{0.283} & 0.761 &
              \multicolumn{1}{c|}{3.128} & \multicolumn{1}{c|}{0.113} & \multicolumn{1}{c|}{0.329} & 0.632\\ \hline\hline
%──────────────────────────── SAC ───────────────────────────
\multicolumn{1}{l|}{\multirow{4}{*}{\rotatebox[origin=c]{90}{\textbf{SAC}}}}&
 Square    & \multicolumn{1}{c|}{2.547} & \multicolumn{1}{c|}{0.045} & \multicolumn{1}{c|}{0.134} & 0.903 &
              \multicolumn{1}{c|}{3.827} & \multicolumn{1}{c|}{0.055} & \multicolumn{1}{c|}{0.186} & 0.891 &
              \multicolumn{1}{c|}{5.507} & \multicolumn{1}{c|}{0.066} & \multicolumn{1}{c|}{0.226} & 0.831\\ \cline{2-14}
\multicolumn{1}{l|}{}&
 Rectangle & \multicolumn{1}{c|}{2.758} & \multicolumn{1}{c|}{0.042} & \multicolumn{1}{c|}{0.130} & 0.908 &
              \multicolumn{1}{c|}{4.052} & \multicolumn{1}{c|}{0.053} & \multicolumn{1}{c|}{0.181} & 0.894 &
              \multicolumn{1}{c|}{5.785} & \multicolumn{1}{c|}{0.065} & \multicolumn{1}{c|}{0.224} & 0.834\\ \cline{2-14}
\multicolumn{1}{l|}{}&
 L-Shape   & \multicolumn{1}{c|}{3.063} & \multicolumn{1}{c|}{0.052} & \multicolumn{1}{c|}{0.152} & 0.865 &
              \multicolumn{1}{c|}{4.358} & \multicolumn{1}{c|}{0.063} & \multicolumn{1}{c|}{0.203} & 0.852 &
              \multicolumn{1}{c|}{6.030} & \multicolumn{1}{c|}{0.074} & \multicolumn{1}{c|}{0.244} & 0.785\\ \cline{2-14}
\multicolumn{1}{l|}{}&
 U-Shape   & \multicolumn{1}{c|}{3.268} & \multicolumn{1}{c|}{0.055} & \multicolumn{1}{c|}{0.158} & 0.852 &
              \multicolumn{1}{c|}{4.547} & \multicolumn{1}{c|}{0.066} & \multicolumn{1}{c|}{0.209} & 0.842 &
              \multicolumn{1}{c|}{6.215} & \multicolumn{1}{c|}{0.076} & \multicolumn{1}{c|}{0.254} & 0.774\\ \hline\hline
%─────────────────────────── OID-PPO ─────────────────────────
\multicolumn{1}{l|}{\multirow{4}{*}{\rotatebox[origin=c]{90}{\textbf{OID-PPO}}}}&
 Square    & \multicolumn{1}{c|}{3.181} & \multicolumn{1}{c|}{\textbf{0.009}} & \multicolumn{1}{c|}{\textbf{0.026}} & \textbf{0.971} &
              \multicolumn{1}{c|}{4.775} & \multicolumn{1}{c|}{\textbf{0.014}} & \multicolumn{1}{c|}{\textbf{0.041}} & \textbf{0.945} &
              \multicolumn{1}{c|}{6.292} & \multicolumn{1}{c|}{\textbf{0.023}} & \multicolumn{1}{c|}{\textbf{0.063}} & \textbf{0.938}\\ \cline{2-14}
\multicolumn{1}{l|}{}&
 Rectangle & \multicolumn{1}{c|}{3.321} & \multicolumn{1}{c|}{\textbf{0.012}} & \multicolumn{1}{c|}{\textbf{0.032}} & \textbf{0.962} &
              \multicolumn{1}{c|}{5.011} & \multicolumn{1}{c|}{\textbf{0.015}} & \multicolumn{1}{c|}{\textbf{0.044}} & \textbf{0.941} &
              \multicolumn{1}{c|}{6.439} & \multicolumn{1}{c|}{\textbf{0.020}} & \multicolumn{1}{c|}{\textbf{0.052}} & \textbf{0.932}\\ \cline{2-14}
\multicolumn{1}{l|}{}&
 L-Shape   & \multicolumn{1}{c|}{3.496} & \multicolumn{1}{c|}{\textbf{0.028}} & \multicolumn{1}{c|}{\textbf{0.077}} & \textbf{0.904} &
              \multicolumn{1}{c|}{5.295} & \multicolumn{1}{c|}{\textbf{0.030}} & \multicolumn{1}{c|}{\textbf{0.093}} & \textbf{0.852} &
              \multicolumn{1}{c|}{6.617} & \multicolumn{1}{c|}{\textbf{0.039}} & \multicolumn{1}{c|}{\textbf{0.103}} & \textbf{0.803}\\ \cline{2-14}
\multicolumn{1}{l|}{}&
 U-Shape   & \multicolumn{1}{c|}{3.642} & \multicolumn{1}{c|}{\textbf{0.028}} & \multicolumn{1}{c|}{\textbf{0.079}} & \textbf{0.893} &
              \multicolumn{1}{c|}{5.463} & \multicolumn{1}{c|}{\textbf{0.043}} & \multicolumn{1}{c|}{\textbf{0.117}} & \textbf{0.801} &
              \multicolumn{1}{c|}{6.781} & \multicolumn{1}{c|}{\textbf{0.061}} & \multicolumn{1}{c|}{\textbf{0.124}} & \textbf{0.746}\\ \hline\hline\hline
\end{tabular}
\end{table*}

\begin{table}[h]
\centering
\caption{Performance Metrics for $F_n = 6$ (Ablation Study)}
\label{table:ablation}
\renewcommand{\arraystretch}{1.2}\setlength{\tabcolsep}{1mm}\small
\begin{tabular}{cc||cccc}
\hline\hline\hline
\multicolumn{2}{l||}{\multirow{2}{*}{}} & \multicolumn{4}{c}{$F_n = 6$}\\ \cline{3-6}
\multicolumn{2}{l||}{} & \multicolumn{1}{c|}{Time (s)} & \multicolumn{1}{c|}{P-Loss} & \multicolumn{1}{c|}{V-Loss} & Reward\\ \hline\hline

%────────── OID-PPO ──────────
\multicolumn{1}{l|}{\multirow{4}{*}{\rotatebox[origin=c]{90}{\textbf{OID-PPO}}}}&
Square     & \multicolumn{1}{c|}{4.775} & \multicolumn{1}{c|}{0.014} & \multicolumn{1}{c|}{0.041} & 0.945\\ \cline{2-6}
\multicolumn{1}{l|}{}&
Rectangle  & \multicolumn{1}{c|}{5.011} & \multicolumn{1}{c|}{0.015} & \multicolumn{1}{c|}{0.044} & 0.941\\ \cline{2-6}
\multicolumn{1}{l|}{}&
L-Shape    & \multicolumn{1}{c|}{5.295} & \multicolumn{1}{c|}{0.030} & \multicolumn{1}{c|}{0.093} & 0.852\\ \cline{2-6}
\multicolumn{1}{l|}{}&
U-Shape    & \multicolumn{1}{c|}{5.463} & \multicolumn{1}{c|}{0.043} & \multicolumn{1}{c|}{0.117} & 0.801\\ \hline\hline

%────────── OID-ASC ──────────
\multicolumn{1}{l|}{\multirow{4}{*}{\rotatebox[origin=c]{90}{\textbf{OID-ASC}}}}&
Square     & \multicolumn{1}{c|}{5.632} & \multicolumn{1}{c|}{0.046} & \multicolumn{1}{c|}{0.123} & 0.867\\ \cline{2-6}
\multicolumn{1}{l|}{}&
Rectangle  & \multicolumn{1}{c|}{5.902} & \multicolumn{1}{c|}{0.048} & \multicolumn{1}{c|}{0.128} & 0.859\\ \cline{2-6}
\multicolumn{1}{l|}{}&
L-Shape    & \multicolumn{1}{c|}{6.274} & \multicolumn{1}{c|}{0.061} & \multicolumn{1}{c|}{0.163} & 0.789\\ \cline{2-6}
\multicolumn{1}{l|}{}&
U-Shape    & \multicolumn{1}{c|}{6.484} & \multicolumn{1}{c|}{0.072} & \multicolumn{1}{c|}{0.188} & 0.742\\ \hline\hline

%────────── OID-NIL ──────────
\multicolumn{1}{l|}{\multirow{4}{*}{\rotatebox[origin=c]{90}{\textbf{OID-NIL}}}}&
Square     & \multicolumn{1}{c|}{3.524} & \multicolumn{1}{c|}{0.089} & \multicolumn{1}{c|}{0.222} & 0.512\\ \cline{2-6}
\multicolumn{1}{l|}{}&
Rectangle  & \multicolumn{1}{c|}{3.673} & \multicolumn{1}{c|}{0.094} & \multicolumn{1}{c|}{0.233} & 0.498\\ \cline{2-6}
\multicolumn{1}{l|}{}&
L-Shape    & \multicolumn{1}{c|}{3.978} & \multicolumn{1}{c|}{0.107} & \multicolumn{1}{c|}{0.279} & 0.446\\ \cline{2-6}
\multicolumn{1}{l|}{}&
U-Shape    & \multicolumn{1}{c|}{4.106} & \multicolumn{1}{c|}{0.118} & \multicolumn{1}{c|}{0.305} & 0.418\\ \hline\hline\hline
\end{tabular}
\end{table}

\subsection{Quantitative Study}
\label{subsec:estimation_uni}
We conduct a quantitative evaluation comparing various models on the OID task. The methods include two optimization-based algorithms, including Metropolis-Hastings (MH)~\cite{chib1995understanding} and Multi-Objective Particle Swarm Optimization (MOPSO)~\cite{mopso}, and four deep reinforcement learning (DRL) models with continuous action spaces, including Deep Deterministic Policy Gradient (DDPG)~\cite{ddpg}, Twin-Delayed DDPG (TD3)~\cite{td3}, Soft Actor-Critic (SAC)~\cite{sac}, and our proposed OID-PPO. All DRL models share a common environment interface to ensure a fair comparison.

Table~\ref{table:oidppo} presents performance metrics across various room shapes and furniture quantities. Since the OID-PPO agent observes only the current layout and two furniture items at each step, the environment is partially observable. OID-PPO consistently achieves the highest total reward, demonstrating effectiveness in generating high-quality interior layouts that satisfy both functional and visual criteria. Among the baseline DRL models, SAC outperforms TD3 and DDPG, highlighting the advantage of diagonal Gaussian policies in partially observable environments with complex spatial constraints.

SAC and OID-PPO, which utilize diagonal Gaussian policies, achieve higher layout quality than deterministic agents like TD3 and DDPG. The stochastic nature of these policies enables sampling from a distribution, facilitating broader exploration and better inference of unobserved environmental dynamics. In contrast, deterministic policies select the same action for a given state, often resulting in suboptimal decisions in ambiguous or complex spaces. Non-learning-based methods such as MH and MOPSO perform poorly in both reward attainment and computational efficiency.

OID-PPO updates its policy directly from new samples, following an on-policy paradigm that yields stable convergence and low losses in both policy and value learning. While SAC is off-policy, its entropy-augmented updates help maintain moderate loss values. In contrast, TD3 and DDPG exhibit high variance and unstable convergence due to limited stochastic exploration. Optimization-based methods, such as MH and MOPSO, do not learn value functions and are therefore excluded from loss comparisons.

In terms of inference time, DDPG and TD3 are the fastest due to their lightweight architectures and deterministic action selection. SAC and OID-PPO require more time because of sampling and more complex policy updates, yet they remain efficient for real-world deployment. In contrast, MH and MOPSO are significantly slower due to iterative search procedures and the absence of learned priors, lacking any learning-based acceleration.

As the number of furniture items $F_n$ increases or the room shape becomes more complex (e.g., U- or L-shape), all methods show reduced performance. The decline is most pronounced for MH and MOPSO, which depend on iterative sampling without learned priors, resulting in longer runtimes and lower layout quality. Among DRL models, deterministic agents such as DDPG and TD3 struggle in high-dimensional constrained spaces due to their limited exploration capabilities. In contrast, stochastic models such as SAC and OID-PPO remain more robust, leveraging action diversity and adaptability to partial observability to sustain higher rewards and reasonable inference times despite growing complexity.

The quantitative study shows that DRL methods outperform optimization-based approaches on the OID task. Among DRL models, those with diagonal Gaussian policies, such as OID-PPO and SAC, consistently outperform deterministic-policy agents under high spatial constraints and partial observability. OID-PPO, in particular, benefits from on-policy learning and structured exploration, achieving the best dynamics and layout quality across all conditions. These results establish OID-PPO as the most robust and effective framework for optimal interior design across varied room shapes and furniture complexities.

\subsection{Ablation Study}
\label{sec:ablation}
To assess the contribution of each guideline, we conduct an ablation study by selectively disabling individual reward functions from the aggregate reward. For controlled evaluation, we fix the room shape to a square and set the furniture count to $F_n=6$, balancing layout complexity and feasibility. Functional pairs, such as desk–chair and bed–side table, are predefined.

\begin{figure*}[t]
    \centering
    \includegraphics[width=\textwidth]{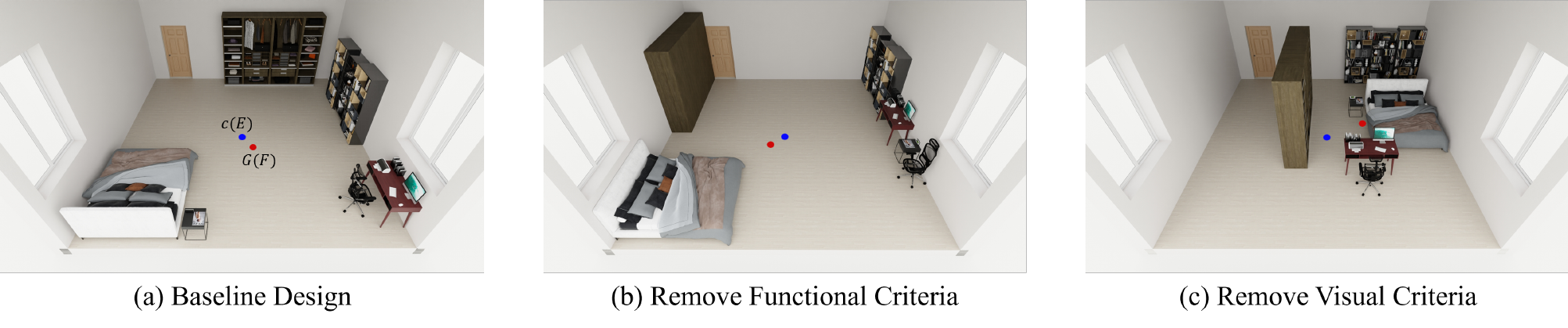}
    \caption{Ablation study under different reward configurations. Blue and red dots indicate the room center $\mathbf{o}$ and the furniture centroid $\bar{\mathbf{x}}_F$, respectively. (a) shows the baseline with all six rewards enabled. (b) shows the effect of removing all functional rewards, causing blocked entrances and reduced usability. (c) depicts the impact of disabling visual constraints, resulting in a cluttered, spatially unbalanced layout. Further ablation results are in Appendix~A.}
    \label{fig:ablation_main}
\end{figure*}

Figure~\ref{fig:ablation_main}~(a) shows the baseline layout with all six reward functions enabled. The blue dot marks the room's geometric center $\mathbf{o}$, and the red dot marks the layout's spatial center $\bar{\mathbf{x}}_F$. All constraints are met: pairwise items are placed closely and aligned ($R_{\text{pair}}$), no obstructions exist near furniture ($R_a$, $R_v$), valid paths connect all items to the door ($R_{\text{path}}$), and the layout is balanced and aligned with room boundaries ($R_b$, $R_{\text{al}}$). Figure~\ref{fig:ablation_main}~(b) shows the effect of removing all functional rewards: the entrance is blocked, paired items are scattered, and usability is compromised despite visual balance. In contrast, Figure~\ref{fig:ablation_main}~(c) turns off all visual constraints; the layout remains functionally valid but appears cluttered and spatially imbalanced, with $\bar{\mathbf{x}}_F$ deviating markedly from the room center $\mathbf{o}$. Additional results—including the impact of removing individual reward terms and modifying placement order or spatial encoding—are presented in Appendix~A. These ablations highlight the distinct and complementary roles of each OID-PPO component in producing high-quality layouts.

Overall, the ablation study shows that individual reward terms encode distinct, complementary design principles. The architectural design of OID-PPO — combining spatial encoding, prioritized placement, and structured rewards — is critical for optimal interior layout generation.
\section{Discussion}

Our results demonstrate that OID-PPO effectively translates expert design knowledge into a computationally tractable framework for interior layout generation. Through structured reward design and stochastic policy learning, it achieves strong performance across diverse room configurations while ensuring stability and adaptability. These findings establish OID-PPO as a practical solution for automating interior design under complex spatial and functional constraints.

Despite the effectiveness of OID-PPO, direct comparison with existing interior design systems remains challenging. Prior DL approaches often rely on proprietary datasets and style-specific goals, which limit the assessment of generalization~\cite{csenbacslar2021rlss}. Earlier RL methods used custom environments with inconsistent rewards and action spaces, lacking standardized protocols for reproducibility~\cite{wang2019rlayout, ieq, csenbacslar2021rlss}. This highlights the need for a shared benchmark that integrates both functional and visual guideline compliance—providing a common platform for rigorous evaluation, fair comparison, and accelerated progress in automated interior design.

A key limitation of OID-PPO is the lack of user preference modeling and real-world constraints. The current framework limits the furniture set to 15 items of a single style, disregarding user aesthetics, color palettes, and factors such as natural or multi-source lighting. It also assumes axis-aligned walls, restricts rotations to $90^\circ$ increments, and excludes vertical placement. Addressing these requires extending the reward function to capture user preferences and illumination, and enabling six-degree-of-freedom, multi-agent actions for curved boundaries, acute angles, and 3D reasoning. These improvements are vital for making OID-PPO a robust backbone for personalized interior design.
\section{Conclusion}

This work presents OID-PPO, a reinforcement learning framework that encodes expert interior design knowledge into structured functional and visual rewards. By embedding guideline compliance directly into the learning objective and using a diagonal Gaussian policy with continuous actions, OID-PPO effectively generates high-quality layouts under tight spatial constraints and partial observability. Experiments conducted across diverse room shapes and furniture counts demonstrate its superiority over optimization-based solvers and DRL baselines in terms of reward and layout quality. Ablation studies validate the necessity and complementarity of the six reward components, underscoring the framework’s robustness. Although currently limited to axis-aligned walls, quantized rotations, and fixed styles, OID-PPO lays a foundation for future extensions incorporating user preferences, lighting, and multi-agent reasoning. This framework provides a robust and extensible foundation for personalized, guideline-compliant interior design automation.

\bibliography{aaai2026}

\newpage \onecolumn \appendix

\section{Appendix A. Ablation Study Results}
\label{appendix:ablation}

\begin{figure*}[t]
    \centering
    \includegraphics[width=0.8\textwidth]{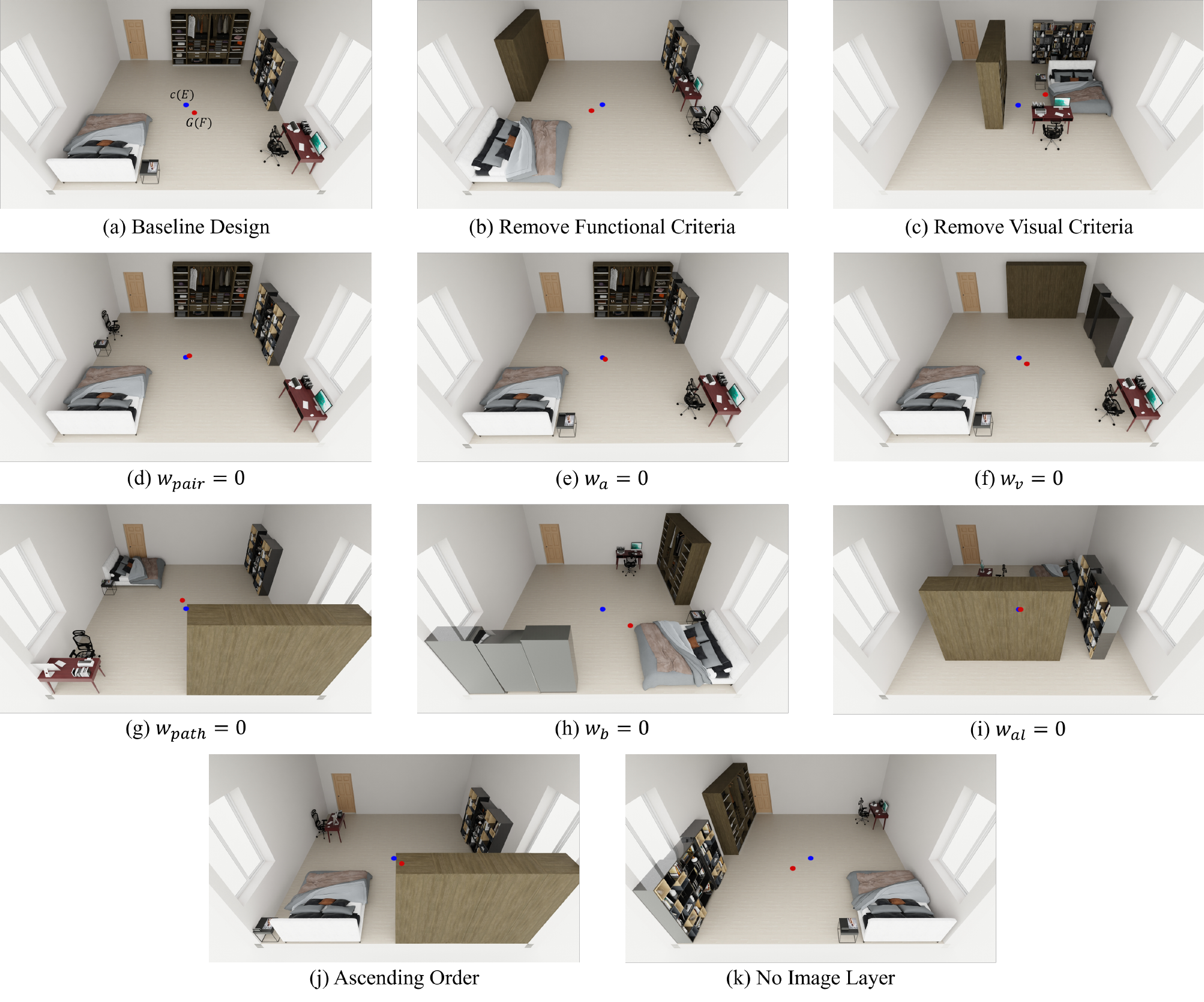}
    \caption{Extended ablation results under various reward and architecture settings. Blue and red dots mark the room center $\mathbf{o}$ and layout centroid $\bar{\mathbf{x}}_F$, respectively. (a--c) replicate key ablations from the main study. (d--g) show effects of removing individual functional rewards: pairwise relations, accessibility, visibility, and pathway connectivity. (h--i) isolate visual components: balance and alignment. (j--k) assess architectural variations: item ordering and spatial encoding. Each case highlights the distinct, complementary roles of these components in layout quality and agent behavior.}
    \label{fig:ablation}
\end{figure*}

To extend the analysis, we examine the effects of removing individual reward components and architectural changes. For consistency, we fix the room shape as a square with $F_n = 6$ furniture items. Functional pairs (e.g., desk–chair, bed–side table) are predefined to reflect typical use.

Figure~\ref{fig:ablation}~(a--c) present the representative cases from the ablation study section and are included here for completeness. Figures~\ref{fig:ablation} (d--g) illustrate the effects of disabling individual functional rewards. In (d), removing $R_{\text{pair}}$ improves balance, but the desk and chair are separated, breaking their functional pairing. In (e), disabling $R_a$ slightly improves visual balance, but the wardrobe occludes the bookshelf, reducing accessibility. In (f), disabling $R_v$ causes furniture to face walls, rendering them unusable. In (g), removing $R_{\text{path}}$ leads to a blocked entrance despite individual item accessibility, making the layout infeasible.

Figures~\ref{fig:ablation} (h--i) illustrate the effects of removing visual rewards. In (h), removing $R_b$ disrupts visual balance, shifting the spatial center away from the geometric center of the room, although functional constraints remain satisfied. Since $R_{\text{al}}$ is active, most of the furniture remains aligned along the walls. In (i), without $R_{\text{al}}$, items are inefficiently placed away from the walls, increasing the wasted space, but the overall spatial distribution remains balanced.

Figure~\ref{fig:ablation} (j) shows the result of placing furniture in ascending size order. Small items are placed in corners first, violating alignment constraints and reducing spatial efficiency. As shown in Table~\ref{table:ablation}, \textit{OID-ASC} leads to higher policy and value losses, with increased inference time, indicating that the placement of larger furniture first yields more efficient layouts in OID-PPO. In contrast, (k) is \textit{OID-NIL} shows substantially lower rewards and elevated losses, suggesting that the removal of spatial encoding impairs the agent's ability to interpret room geometry and meet spatial constraints. Together, these findings underscore the crucial role of each OID-PPO component in producing well-structured and high-quality interior layouts.

\section{Appendix B. Proof of Theorem 1}
\label{appendix:proof}

\begin{proof}[Proof]
    We prove convergence of the policy sequence $\{\pi_{\theta_k}\}_{k\ge 0}$ under the conditions of Theorem~1, following the standard stochastic approximation framework and the ODE method. Each step corresponds to one of the key assumptions and intermediate results stated in the supporting propositions and lemmas.

    \textbf{Step 1 (Bounded Return).} By Proposition~\ref{prop:horizon} and Lemma~\ref{lemma:reward}, each episode terminates after finite steps $H=|F|$, and rewards are bounded within $[-1,1]$, thus cumulative returns are bounded:
    \begin{equation*}
        \left|\sum_{t=0}^{H}\gamma^t R_{idg}(s_t,a_t)\right|\le\frac{1-\gamma^{H+1}}{1-\gamma}\le\frac{1}{1-\gamma}=C<\infty.
    \end{equation*}
    \textbf{Step 2 (Stable Gradient Estimates).} Since the PPO algorithm and GAE provide bounded and low-variance advantage estimates, the gradients remain stable and bounded:
    \begin{equation*}
        \|\nabla_\theta L_{\text{clip}}(\theta)\|\le L_g<\infty,\quad\forall \theta\in\Theta.
    \end{equation*}
    \textbf{Step 3 (Robbins–Monro Condition and ODE Approximation).} The Adam optimizer with decaying learning rate effectively satisfies the Robbins–Monro conditions:
    \begin{equation*}
        \sum_{k=0}^{\infty}\alpha_k=\infty,\quad \sum_{k=0}^{\infty}\alpha_k^2<\infty.
    \end{equation*}
    Under these conditions, the parameter updates approximate the ODE:
    \begin{equation*}
        \dot{\theta}(t)=\nabla_\theta L_{\text{clip}}(\theta(t)).
    \end{equation*}
    By Borkar–Meyn stochastic approximation theory, we obtain almost sure convergence:
    \begin{equation*}
        \theta_k\xrightarrow{\text{a.s.}}\Theta_\infty\subseteq\{\theta:\nabla_\theta L_{\text{clip}}(\theta)=0\}.
    \end{equation*}
    \textbf{Step 4 (Lyapunov Stability).} Assume that Lyapunov function is defined as $V(\theta)=-L_{\text{clip}}(\theta)$. Its time derivative satisfies:
    \begin{equation*}
        \langle\nabla V(\theta),\dot{\theta}\rangle=-\|\nabla L_{\text{clip}}(\theta)\|^2\le0,
    \end{equation*}
    thus ensuring stability and convergence to a stationary point.

    \textbf{Step 5 (Surrogate and True Return Approximation).} Proposition~\ref{prop:monotonic} guarantees bounded bias between surrogate and true returns:
    \begin{equation*}
        |J(\theta)-L_{\text{clip}}(\theta)|\le O(\epsilon),\quad\epsilon\ll1.
    \end{equation*}
    As $\theta_k\rightarrow\theta^*$, the continuity of $L_{\text{clip}}$ and $J$ implies:
    \begin{equation*}
        |J(\theta_k)-J(\theta^*)|\rightarrow 0,\quad\text{a.s.}
    \end{equation*}
    Thus, the return converges almost surely:
    \begin{equation*}
        J(\theta_k)\xrightarrow{\text{a.s.}}J(\theta^*).
    \end{equation*}
    \textbf{Step 6 (Persistent Exploration and Monotonic Improvement).} By Proposition~\ref{prop:exploration}, exploration is ensured by the positive variance of the diagonal Gaussian policy, guaranteeing infinite visits to the feasible action space. By Proposition~\ref{prop:monotonic}, monotonic improvement in returns (up to \( O(\epsilon) \)) is guaranteed during training:
    \begin{equation*}
       J(\theta_{k+1})\ge J(\theta_k)-O(\epsilon). 
    \end{equation*}
    Combining Steps 1--6, the parameter sequence $\theta_k$ converges almost surely to a stationary point $\theta^* \in \Theta_\infty$ of the clipped surrogate objective. Moreover, the true expected return $J(\theta_k)$ converges to $J(\theta^*)$ within an $O(\epsilon)$ bias, with $J(\theta_k)$ monotonically non-decreasing up to this bias. This establishes the convergence of OID-PPO.
\end{proof}

\end{document}